\newtheorem{theorem}{Theorem}
\newtheorem{corollary}[theorem]{Corollary}
\newtheorem{lemma}[theorem]{Lemma}
\newenvironment{proof}[1][Proof]{\textbf{#1.} }{\ \rule{0.5em}{0.5em}}
\begin{document}

\title{Decision theory in an algebraic setting}
\author{Maurizio Negri}
\maketitle

\begin{abstract}
In decision theory an act is a function from a set of conditions to the set
of real numbers. The set of conditions is a partition in some algebra of
events. The expected value of an act can be calculated when a probability
measure is given. We adopt an algebraic point of view by substituting the
algebra of events with a finite distributive lattice and the probability
measure with a lattice valuation. We introduce a partial order on acts that
generalizes the dominance relation and show that the set of acts is a
lattice with respect to this order. Finally we analyze some different kinds
of comparison between acts, without supposing a common set of conditions for
the acts to be compared.

\vspace{5mm} \noindent \textbf{Keywords:} Decision theory, lattice theory,
partitions, Allais paradox.
\end{abstract}

\section{\label{par1}Classical acts}

The concept of an act is at the basis of decision theory, in fact decision
making under risk can be reduced to the choice among different acts on the
basis of their expected value. Loosely speaking, we can say that an act is a
function from a set of conditions to a set of consequences. In this
paragraph we introduce the intuitive framework for decisions, grounded on
the concept of probability space, but in the following paragraphs we shall
adopt a more algebraic point of view, based on the concept of valued lattice.

The consequence of an act can be any kind of thing, but we confine ourselves
to elements of $R$, the set of real numbers. We only observe that in an
economical framework real numbers can represent any definite amount of
goods, money and so on, but in a psychological framework they may also
represent degrees of satisfaction, subjective feelings of pain and pleasure
and so on.

The conditions of an act are events from a probability space. A probability
space is a triple \ $(S,\mathcal{C}_{S},p)$ where $S$ is a sample space, $%
\mathcal{C}_{S}$ a field of sets over $S$ and $p:\mathcal{C}_{S}\rightarrow
\lbrack 0,1]$ a function satisfying Kolmogoroff's axioms: 1) $p(A)=1$, 2) $%
p(X\cup Y)=p(X)+p(Y)$, when $X\cap Y=\emptyset $. In this way probability is
seen as the measure of an event represented \ by a set. In the following we
shall limit ourselves to finite sample spaces, so the algebra of events $%
C_{S}$ will coincide with $\mathcal{P}(S)$, the Boolean algebra of all
subsets of $S$. The conditions of an act must satisfy a fundamental
property: they must be a partition of $S$. We say that a subset $E$ of $P(S)$
is a \textit{partition} of $S$ when the following three conditions are
satisfied:

\begin{enumerate}
\item  $\bigcup E=S$,

\item  $e_{2}\cap e_{2}=\emptyset $, for all $e_{2},e_{2}\in E$ with $%
e_{2}\neq e_{2}$,

\item  $e\neq \emptyset $, for all $e\in E$.
\end{enumerate}

\noindent One of the possible partitions is given by the set of all atoms in 
$\mathcal{P}(S)$, i.e the set of all singletons $\{s\}$ with $s\in S$.

Now we can define an \textit{act on} $E$ as a function $\alpha :E\rightarrow
R$, where $E$ is a partition of $S$. We denote with $A(E)$ the set of all
acts on $E$. The elements of $\alpha \lbrack E]$, the range of $\alpha $,
are the consequences (rewards, payoffs) of $\alpha $: intuitively, $\alpha
(e)$ is the consequence of $\alpha $ when the event $e$ happens. The choice
of a partition as the domain of an act reflects a relevant aspect of real
life acts, where we are confronted with a set of alternative and exclusive
conditions, represented by events $e_{1},...,e_{n}$, leading to consequences 
$\alpha (e_{1}),...,\alpha (e_{n})$. This amounts to say that the domain $%
\{e_{1},...,e_{n}\}$ of an act is a partition of $S$. For every state of the
world, for every experimental outcome, one and only one event $e_{i}$ of the
partition $E$ takes place leading to a single consequence $\alpha (e_{i})$.

A central problem of decision theory is to define a preference relation on
the set of acts. When $\alpha $, $\beta \in A(E)$,$\ \beta $ is obviously
preferred to $\alpha $ when it gives a better or equal reward for all
conditions and in this case we say that $\beta $ \textit{dominates} $\alpha $%
. So we define $\alpha \preceq _{E}\beta $, iff $\alpha (e)\leq \beta (e)$
for all $e\in E$. The relation $\preceq _{E}$ is a partial order on $A(E)$,
but not every pair of acts on $A(E)$ can be compared in $\preceq _{E}$, so
it is not complete. If $\alpha \in A(E)$ and $\beta \in A(D)$, where $E$ and 
$D$ are different sets of conditions, the relation of dominance is not
defined.

When we choose a dominant act, we ignore all questions about the probability
of the relevant conditions, the elements of $E$ involved in the decision
process. So we define another preference relation that takes in account the
probability of events/conditions. Given a probability measure $p:$ $\mathcal{%
P}(S)\rightarrow \lbrack 0,1]$, we define the \textit{expected value} of $%
\alpha $ \textit{with respect to} $p$ setting 
\begin{equation*}
\exp (\alpha ,p)=\sum \{\alpha (e)p(e):e\in E\}
\end{equation*}
and we define $\alpha \preceq _{\exp }\beta $ iff $\exp (\alpha ,p)\leq \exp
(\beta ,p)$. The relation $\preceq _{\exp }$ is reflexive, transitive and
complete, i.e. is a total preorder. ($\preceq _{\exp }$ is not
antisymmetric, so it is not a partial order.) The choice among different
acts, with respect to a given probability measure, is accomplished by
ranking acts by their expected value: this is the most important rule of
choice in the field of decision under risk. In general, we observe that we
may have $\exp (\alpha ,p)\leq \exp (\beta ,p)$ even if $\alpha (x)\leq
\beta (x)$ holds in a single case ($\alpha (x)\leq \beta (x)$ if $x=e$ and $%
\alpha (x)>\beta (x)$ if $x\neq e$), because the relevance of the single
condition $x\in E$ in establishing $\alpha \preceq _{\exp }\beta $ depends
on its probability value $p(x)$. A huge value $p(e)$ may rule out all other
conditions $x\in E$. Some critical remarks against the ranking of acts based
on expected utility are due to Allais and Ellsberg, in \cite{allais1953} and 
\cite{ellsberg1961}. A way out to Allais Paradox is sketched in Appendix \ref
{Appendix B}, where the notion of intrinsic expected value is introduced as
the ratio between the expected value of $\alpha $ and the total sum of all
possible rewards of $\alpha $.

The comparison of acts with respect to expected utility is generally
confined to acts on the same set of conditions. This is clear when the
representation of decisions is based on decision matrices. (See, for
instance, \cite{jeffrey1983}.) We underline, however, that every couple of
acts $\alpha :E\rightarrow R$ and $\beta :D\rightarrow R$ can be compared in 
$\preceq _{\exp }$, as far as the conditions $E$ and $D$ are partitions of
the same algebra of events. In this way we can rely on a common probability
measure $p$ and so we can compute the expected value of $\alpha $ and $\beta 
$ with respect to $p$. From the point of view of expected value, the ranking
of $\alpha $ and $\beta $ is reduced to the comparison of two real numbers $%
\exp (\alpha ,p)$ and $\exp (\beta ,p)$, leaving out every consideration
regarding the very nature of the events/conditions involved in the decision
process, even if the result may be somewhat unnatural. In this work we
introduce a partial order $\preceq _{v}$ in which acts based on different
set of conditions can be compared, as it happens with expected value. The
comparison of acts in $\preceq _{v}$is a generalization of dominance, taking
in account the partial order of the partitions involved (see Appendix \ref
{Appendix A}) and the probability of the relevant conditions.

The plan of this work is the following. In the second paragraph we analyze
the connections between acts and lotteries or gambles. In the third
paragraph we adopt an algebraic standpoint: the algebra of events $\mathcal{P%
}(S)$, the Boolean algebra of all subsets of $S$, is substituted by a finite
distributive lattice $\mathcal{A}$, the set of conditions of an act becomes
an algebraic partition of $\mathcal{A}$ and an act is a function from such a
partition to the set of real numbers. Then we introduce a partial order $%
\preceq _{v}$ on acts and in the fourth paragraph we show that the set of
all acts on $\mathcal{A}$ is a lattice with respect to $\preceq _{v}$. In
the fifth paragraph we discuss some different ways of comparing acts.

\section{Acts and lotteries}

The process of decision is often described in the literature as a choice
between lotteries or gambles: the relationships between acts and lotteries
are sketched in this paragraph. For all finite set $X$, we say that a
function $f:X\rightarrow \lbrack 0,1]$ is a \textit{distribution on} $X$
when $\sum \{f(x):x\in X\}=1$. Given a finite set $Z=\{z_{1},...,z_{n}\}$ of
real numbers, the outcomes or rewards, a \textit{lottery on}\ $Z$ is a
distribution $l:Z\rightarrow \lbrack 0,1]$. The expected value of $l$ is
defined as follows: 
\begin{equation*}
Exp(l)=\sum \{zl(z):z\in Z\}.
\end{equation*}

The concepts of lottery and act are connected but not equivalent. To every
pair constituted by an act $\alpha :E\rightarrow R$ and a probability
measure $p:$ $\mathcal{P}(S)\rightarrow \lbrack 0,1]$ we can associate a
lottery $l_{\alpha ,p}$ on $\alpha \lbrack E]$ with the same expected value,
defined as follows: for all $x\in \alpha \lbrack E]$, we set 
\begin{equation*}
l_{\alpha ,p}(x)=p(\bigcup \alpha ^{-1}(x)\}).
\end{equation*}
We show that $l_{\alpha ,p}$ is a lottery by verifying that $l_{\alpha ,p}$
is a distribution on $\alpha \lbrack E]$. If $x\in \alpha \lbrack E]$ then 
\begin{equation*}
l_{\alpha ,p}(x)=\sum \{p(e):e\in \alpha ^{-1}(x)\}
\end{equation*}
because the events in $E$ belong to a partition and are pairwise disjoint.
We have 
\begin{eqnarray*}
\sum \{l_{\alpha ,p}(x):x\in \alpha \lbrack E]\}= &&\sum \{\sum \{p(e):e\in
\alpha ^{-1}(x)\}:x\in \alpha \lbrack E]\} \\
= &&\sum \{p(e):e\in E\}=1.
\end{eqnarray*}

Now we prove that $Exp(\alpha ,p)=Exp(l_{\alpha ,p})$. In fact, 
\begin{eqnarray*}
Exp(\alpha ,p) &=&\sum \{\alpha (e)p(e):e\in E\} \\
&=&\sum \{\sum \{\alpha (e)p(e):e\in a^{-1}(x)\}:x\in \alpha \lbrack E]\} \\
&=&\sum \{\sum \{xp(e):e\in a^{-1}(x)\}:x\in \alpha \lbrack E]\} \\
&=&\sum \{x\sum \{p(e):e\in a^{-1}(x)\}:x\in \alpha \lbrack E]\} \\
&=&\sum \{xl_{\alpha ,p}(x):x\in \alpha \lbrack E]\} \\
&=&Exp(l_{\alpha ,p}).
\end{eqnarray*}

In the other direction, we cannot immediately associate an act to a lottery,
because a lottery $l$ is only a finite sequence $(l(z_{1}),...,l(z_{n}))$ of
probability values adding to $1$, without any reference to a sample space,
an algebra of events and a probability measure on this algebra. So, given a
lottery $l$ on $Z$, we must supply a finite Boolean algebra $\mathcal{P}(S)$
and a partition $E$ before defining the act $\alpha _{l}$ associated to $l$.
If $Z$ contains $n$ rewards, we choose $S=\{s_{1}...,s_{n}\}$. Then $%
E=\{\{s_{i}\}:1\leq i\leq n\}$ is a partition of $S$ and we define an act $%
\alpha _{l}:E\rightarrow R$ setting $\alpha _{l}(\{s_{i}\})=z_{i}$. Finally,
we define a probability measure $p$ on $\mathcal{P}(S)$ starting with $%
p(\{s_{i}\})=l(z_{i})$: as every event can be expressed as a disjoint union
of singletons, we can extend $p$ to the whole of $\mathcal{P}(S)$. Now we
show that $\exp (l)=\exp (\alpha _{l},p)$ where $\alpha _{l}$ and $p$ are
defined as above: 
\begin{eqnarray*}
\exp (\alpha _{l},p) &=&\sum \{\alpha
_{l}(\{s_{i}\})p(\{s_{i}\}):\{s_{i}\}\in E\} \\
&=&\sum \{z_{i}l(z_{i}):z_{i}\in Z\} \\
&=&\exp (l)
\end{eqnarray*}
\ where the second line follows because there is a bijection between $Z$ and 
$E$.

\section{Acts in finite distributive lattices}

The framework for acts introduced so far can be formulated in algebraic
terms as follows. Let $\mathcal{A}$ be a finite distributive lattice, we
introduce an algebraic counterpart of the set theoretic notion of partition
as follows: we say that $E\subseteq A$ is an \textit{algebraic} \textit{%
partition} \textit{of} $\mathcal{A}$ if

\begin{enumerate}
\item  $\bigvee E=1$,

\item  $e_{2}\wedge e_{2}=0$, for all $e_{2},e_{2}\in E$ with $e_{2}\neq
e_{2}$,

\item  $e\neq 0$, for all $e\in E$.
\end{enumerate}

We speak of a partition $E$ \textit{of} $\mathcal{A}$ ($E$ \textit{on} $A$)
when an algebraic (set-theoretical) partition is intended. We denote with $%
\Pi (\mathcal{A})$ ($\Pi (A)$) the set of all algebraic (set-theoretical)
partitions of $\mathcal{A}$($A$). The aim of 3) is to avoid redundancies. On
one side, if $E$ is a partition and $0\in E$, then $E$ is redundant because $%
E-\{0\}$ is a partition too. On the other side, if $E$ is a redundant
partition and so, for some $e\in E$, $E-\{e\}$ is a partition too, then we
can easily see that $e=0$. In fact, $e=e\wedge \bigvee (E-\{e\})=0$. To make
life easier we have collected some basic results about partitions in
Appendix \ref{Appendix A}.

Given a partition $E$ of $\mathcal{A}$, an \textit{act on} $E$ is a function 
$\alpha :E\rightarrow R$. We denote with $A(E)$ the set of all acts on $E$.
An act \textit{of} $\mathcal{A}$ is an act $\alpha :E\rightarrow R$, for
some partition $E$ of $\mathcal{A}$. We denote with $A(\mathcal{A})$ the set
of all acts of $\mathcal{A}$, i.e. $\bigcup \{A(E):E\in \Pi (\mathcal{A})\}$%
. It can be easily seen that the intuitive notion of an act, as introduced
in the first paragraph, is only a particular case of the algebraic notion.
Finally, the concept of a probability measure $p$ on $\mathcal{P}(S)$ is to
be generalized to the concept of a valuation $v$ on $\mathcal{A}$, thus
obtaining a valued lattice $(\mathcal{A},\nu )$. To make our exposition
self-contained, we introduce some basic facts about valued lattices. (See 
\cite{birkhoff1967} Chapter X.)

When $\mathcal{A}$ is a lattice we say that a function $v:A\rightarrow R$ is
a \textit{valuation} on $\mathcal{A}$ if 
\begin{equation}
v(a\vee b)=v(a)+v(b)-v(a\wedge b).  \tag{$\ast $}  \label{1}
\end{equation}
If $x\leq y$ implies $v(x)\leq v(y)$, we say that $v$ is \textit{isotone}; $%
v $ is \textit{strictly isotone} if we can substitute $\leq $ with $<$. (In 
\cite{birkhoff1967} a strictly isotone valuation is called a \textit{positive%
} one.) In the following we will confine ourselves to non-negative
valuations, i.e. valuations such that $0\leq v(a)$, for all $a\in A$. A 
\textit{valued lattice} is a pair $(\mathcal{A},v)$ where $\mathcal{A}$ is a
lattice and $v$ a valuation on $\mathcal{A}$.

If $\mathcal{A}$ is a bounded lattice, we say that $v$ is a \textit{bounded
lattices valuation} if $v$ is a valuation on $\mathcal{A}$ and $v(0)=0$ and $%
v(1)=1$. If $v$ is an isotone valuation on a bounded lattice $\mathcal{A}$,
then $v[A]\subseteq \lbrack 0,1]$. A \textit{valued bounded lattice} is a
pair $(\mathcal{A},v)$ where $\mathcal{A}$ is a bounded lattice and $v$ a
bounded lattices valuation on $\mathcal{A}$.

If $\mathcal{A}$ is a Boolean algebra, we say that $v$ is a \textit{Boolean
valuation} if $v$ is a bounded lattices valuation on $\mathcal{A}$. A 
\textit{valued Boolean algebra} is a pair $(\mathcal{A},v)$, where $\mathcal{%
A}$ is a Boolean algebra and $v$ a Boolean valuation on $\mathcal{A}$. We
can give an equivalent definition of a Boolean valuation as follows. In a
lattice with $0$, a function $f:A\rightarrow \lbrack 0,1]$ is said to be 
\textit{additive} iff $f(a\vee b)=f(a)+f(b)$, whenever $a\wedge b=0$. It can
be easily proved that, if $\mathcal{A}$ is a Boolean algebra and $%
v:A\rightarrow \lbrack 0,1]$, then $v$ is a Boolean valuation iff $v(1)=1$
and $v$ is additive. So a probability space $\ (A,\mathcal{C}_{A},p)$, where 
$A$ is a finite sample space, $\mathcal{C}_{A}$ a field of sets on $A$ and $%
p $ a probability measure satisfying Kolmogoroff's axioms with finite
additivity, is a particular case of valued Boolean algebra. It can be easily
proved that: if $(\mathcal{A},v)$ is a Boolean valued algebra then $v$ is
isotone and $v(\lnot a)=1-v(a)$. As $v$ is isotone, every Boolean valuation
takes its values in $[0,1]$ (see \cite{negri2013}, par. 2).

When $\mathcal{A}$ is a finite distributive lattice, $E$ is a partition of $%
\mathcal{A}$ and $v$ is an isotone valuation on $\mathcal{A}$, we can define
the \textit{expected value of an act} $\alpha :E\rightarrow R$ \textit{with
respect to} $v$ setting 
\begin{equation*}
\exp (\alpha ,v)=\sum \{\alpha (e)v(e):e\in E\}.
\end{equation*}
When the valuation $v$ is clear from the context, we simply write $\exp
(\alpha )$. We underline that we confine ourselves to isotone valuations, so
that $\nu (a)\in \lbrack 0,1]$ for all $a\in A$. As in the preceding
paragraph, acts can be ranked on the basis of their expected value so we
define, for all $\alpha $ and $\beta $ in $A(\mathcal{A})$, $\alpha \preceq
_{\exp }\beta $ iff $\exp (\alpha ,v)\leq \exp (\beta ,v)$. The relation $%
\preceq _{\exp }$ is reflexive, transitive and complete.

For acts having the same domain, it is natural to define an order pointwise.
Suppose $\alpha $, $\beta \in A(E)$, then we define $\alpha \preceq
_{E}\beta $ iff $\alpha (e)\leq \beta (e)$, for all $e\in E$. In this case,
we say that $\beta $ \textit{dominates} $\alpha $. It can be easily shown
that $A(E)$ is a partial order with respect to $\preceq _{E}$ and in
particular a lattice where, for all $e\in E$, $\inf (\alpha ,\beta )(e)=\min
(\alpha (e),\beta (e))$ and $\sup (\alpha ,\beta )(e)=\max (\alpha (e),\beta
(e))$. We leave a direct proof to the reader, but we observe that it is only
a particular case of the following proposition:

For all lattice $\mathcal{B}$ and all set $E$, the power $\mathcal{B}^{E}$,
where the order relation is defined pointwise, is a lattice. If $\mathcal{B}$
is distributive, bounded, complemented, so is $\mathcal{B}^{E}$.

\noindent In fact, the axioms involved are equational and then are preserved
by direct products and powers (see, for instance, \cite{ck1990} par. 6.2).
As real numbers with their natural order are a lattice, so is $A(E)$ when
acts are ordered pointwise (i.e. by dominance).

The preference relations $\preceq _{E}$ and $\preceq _{\exp }$are inspired
by different points of view. In $\alpha \preceq _{E}\beta $ acts are
compared with respect to their conditions and this is possible because $%
\alpha $ and $\beta $ have $E$ as a common domain. When we assert $\alpha
\preceq _{E}\beta $ we know that the payoff of $\beta $ is better or equal
to the payoff of $\alpha $ for all conditions $e\in E$. In the case of
expected value, an overall valuation of the performances of $\alpha $ and $%
\beta $ is given by separately calculating the weighted average of payoffs
of each one of them, for all conditions. There is no need of a common set of
conditions $E$. But even if there is such an $E$ , we don't know whether the
payoff of $\beta $ is better or equal to the payoff of $\alpha $ for all
conditions $e\in E$, we know only that it is so for some $e$ and in
particular for $e$ with an high probability value.

Now we introduce a preference relation $\preceq _{v}$ on acts in $A(\mathcal{%
A})$ that borrows from $\preceq _{E}$ the comparison of conditions and from $%
\preceq _{\exp }$the reference to probabilities. For all $E,D\in \Pi (%
\mathcal{A})$, we say that $E$ is a \textit{refinement} of $D$, in symbols $%
E\leq B$ when, for all $e\in E$, there is a $d_{e}\in D$ such that $e\leq
d_{e}$ . Such an element of $D$ is unique (see Appendix \ref{Appendix A},
lemma \ref{lemma2}) and this is why we denote it by $d_{e}$. If $\alpha $, $%
\beta \in $ $A(\mathcal{A})$, where $\alpha :E\rightarrow R$ and $\beta
:D\rightarrow R$, and $v$ is an isotone valuation on $\mathcal{A}$, we say
that $\beta $ \textit{is preferred to} $\alpha $ \textit{with respect to} $%
\nu $, in symbols $\alpha \preceq _{v}\beta $, when the following conditions
are satisfied:

\begin{enumerate}
\item  $E\leq B$,

\item  for all $e\in E$, $\alpha (e)\leq \beta (d_{e})\frac{v(e)}{v(d_{e})}$.
\end{enumerate}

As in the case of $\alpha \preceq _{E}\beta $, the performances of $\alpha $
and $\beta $ are compared with respect to the single conditions of the acts
involved. As in the case of $\alpha \preceq _{\exp }\beta $, the comparison
of $\alpha $ and $\beta $ depends on $v$, i.e. on the probability values of
the relevant conditions. In fact, we cannot compare directly $\alpha (e)$
with $\beta (d_{e})$, as we did with the relation of dominance, because $e$
and $d_{e}$ belong to different sets of conditions, so we compare $\alpha
(e) $ with $\beta (d_{e})\frac{v(e)}{v(d_{e})}$. As $\frac{v(e)}{v(d_{e})}<1$%
, because $e\leq d_{e}$ implies $v(e)\leq v(d_{e})$, we compare $\alpha (e)$
with a reduced $\beta (d_{e})$. \ This reduction can be justified as
follows. The value $\beta (d)$, for any $d\in D$, can be smeared on the set $%
E_{d}=\{x\in E:x\leq d\}$ as the set $\{\beta (d)\frac{v(x)}{v(d)}:x\in
E_{d}\}$. In fact, we have 
\begin{eqnarray*}
\sum \{\beta (d)\frac{v(x)}{v(d)}:x\in E_{d}\} &=&\frac{\beta (d)}{v(d)}\sum
\{v(x):x\in E_{d}\} \\
&=&\frac{\beta (d)}{v(d)}v(\bigvee E_{d}) \\
&=&\frac{\beta (d)}{v(d)}v(d) \\
&=&\beta (d),
\end{eqnarray*}
where the second line follows because $x\wedge x^{\prime }=0$ for all $x$, $%
x^{\prime }\in E$ and the third line because $\bigvee E_{d}=d$ by theorem 
\ref{teo1} of Appendix \ref{Appendix A}. So we can compare $\alpha (e)$ with 
$\beta (d_{e})\frac{v(e)}{v(d_{e})}$ on $E$ and this is just clause 2.

The following theorem shows that $\preceq _{E}$ is the restriction to $A(E)$
of $\preceq _{v}$ defined on $A(\mathcal{A})$.

\begin{theorem}
\label{teo2.1}For all $\alpha $, $\beta \in A(E)$, $\alpha \preceq _{v}\beta 
$ iff $\alpha \preceq _{E}\beta $.
\end{theorem}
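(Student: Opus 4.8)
The plan is to exploit the fact that when $\alpha$ and $\beta$ share the single domain $E$, the refinement condition built into $\preceq _{v}$ degenerates: the codomain partition is $D=E$, the element $d_{e}$ attached to each $e$ is forced to be $e$ itself, and consequently the correction factor $\frac{v(e)}{v(d_{e})}$ collapses to $1$. Once that collapse is established, clause 2 in the definition of $\preceq _{v}$ becomes, word for word, the dominance inequality defining $\preceq _{E}$, and both implications fall out by direct substitution.

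First I would isolate the key observation: for $\alpha ,\beta \in A(E)$ we have $D=E$, and the refinement $E\leq E$ holds with $d_{e}=e$ for every $e\in E$. Reflexivity of refinement gives clause 1 for free; the real content is the identification of the witness. Fix $e\in E$ and let $d_{e}\in E$ be the unique element with $e\leq d_{e}$, unique by Lemma \ref{lemma2} of Appendix \ref{Appendix A}. From $e\leq d_{e}$ we get $e\wedge d_{e}=e$, whereas distinct members of a partition satisfy $e\wedge d_{e}=0$ by clause 2 of the definition of an algebraic partition. Hence if $d_{e}\neq e$ we would obtain $e=0$, contradicting clause 3. Therefore $d_{e}=e$, and so $\frac{v(e)}{v(d_{e})}=1$.

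With this in hand both directions are immediate. If $\alpha \preceq _{v}\beta $, then clause 2, evaluated at $d_{e}=e$, reads $\alpha (e)\leq \beta (e)\frac{v(e)}{v(e)}=\beta (e)$ for every $e\in E$, which is exactly $\alpha \preceq _{E}\beta $. Conversely, if $\alpha \preceq _{E}\beta $, i.e. $\alpha (e)\leq \beta (e)$ for all $e\in E$, then clause 1 holds as $E\leq E$, and clause 2 follows by rewriting $\beta (e)$ as $\beta (e)\frac{v(e)}{v(e)}$, giving $\alpha \preceq _{v}\beta $.

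The only genuine subtlety, and the point I expect to need the most care, is the reduction of $\frac{v(e)}{v(d_{e})}$ to $1$. It rests on the identification $d_{e}=e$, which uses the disjointness and nontriviality of partition elements, and it tacitly requires $v(e)\neq 0$ so that the quotient is well defined; this is secured whenever $v$ is strictly isotone, since $e\neq 0$ then forces $v(e)>v(0)=0$. Everything else in the argument is routine substitution.
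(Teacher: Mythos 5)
Your proof is correct and takes essentially the same route as the paper's: both rest on the identification $d_{e}=e$ (the paper writes $e_{e}=e$), which collapses the factor $\frac{v(e)}{v(d_{e})}$ to $1$ and turns clause 2 of $\preceq _{v}$ into exactly the dominance inequality, after which both implications are immediate substitutions. Your closing remark that the quotient tacitly needs $v(e)\neq 0$ (e.g.\ via strict isotonicity) flags a real loose end in the paper's framework rather than a defect in your argument, and the paper's own proof silently makes the same assumption.
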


\begin{proof}
We have $\alpha \preceq _{v}\beta $ iff, for all $e\in E$, $\alpha (e)\leq
\beta (e_{e})\frac{v(e)}{v(e_{e})}$, where $e_{e}$ is the only $x\in E$ such
that $e\leq x$. But $e_{e}=e$ so $\alpha \preceq _{v}\beta $ iff for all $%
e\in E$, $\alpha (e)\leq \beta (e)\frac{v(e)}{v(e)}$ iff $\alpha (e)\leq
\beta (e)$ iff $\alpha \preceq _{E}\beta $.
\end{proof}

The following theorem shows the relationship between $\preceq _{\exp }$ and $%
\preceq _{v}$.

\begin{theorem}
\label{teo2.2}For all $\alpha $, $\beta \in A(\mathcal{A})$, if $v$ is an
isotone valuation then $\alpha \preceq _{v}\beta $ implies $\alpha \preceq
_{\exp }\beta $.
\end{theorem}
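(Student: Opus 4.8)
The plan is to unfold both expected values as sums indexed by the finer partition $E$ and then compare them termwise. Write $d_e$ for the unique block of $D$ above $e$ (clause~1 of the definition of $\preceq_v$ guarantees existence, and uniqueness is the cited lemma of Appendix~\ref{Appendix A}), and for each $d\in D$ set $E_d=\{e\in E:e\le d\}=\{e\in E:d_e=d\}$. The families $\{E_d:d\in D\}$ partition $E$, so $\exp(\alpha,v)=\sum\{\alpha(e)v(e):e\in E\}=\sum_{d\in D}\sum_{e\in E_d}\alpha(e)v(e)$. The first thing I would record is the identity $\sum_{e\in E_d}v(e)=v(d)$: this is exactly the computation already carried out for the smearing of $\beta(d)$, where $\sum\{v(x):x\in E_d\}=v(\bigvee E_d)=v(d)$ follows from additivity of $v$ on the pairwise disjoint elements of $E_d$ together with $\bigvee E_d=d$ (theorem~\ref{teo1} of Appendix~\ref{Appendix A}). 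Consequently $\exp(\beta,v)=\sum_{d\in D}\beta(d)v(d)=\sum_{d\in D}\beta(d)\sum_{e\in E_d}v(e)=\sum_{e\in E}\beta(d_e)v(e)$; that is, the expected value of $\beta$ can be rewritten as a sum over the blocks of $E$, weighting the common value $\beta(d_e)$ by $v(e)$.

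With both expected values expressed over $E$, it suffices to prove $\sum_{e\in E}\alpha(e)v(e)\le\sum_{e\in E}\beta(d_e)v(e)$, and the natural route is termwise: show $\alpha(e)v(e)\le\beta(d_e)v(e)$ for each $e$ and add. Since $v\ge 0$ this reduces to $\alpha(e)\le\beta(d_e)$. Here clause~2 and isotonicity of $v$ enter: $e\le d_e$ gives $v(e)\le v(d_e)$, hence $0<\frac{v(e)}{v(d_e)}\le 1$, and feeding this into clause~2 yields $\alpha(e)\le\beta(d_e)\frac{v(e)}{v(d_e)}\le\beta(d_e)$. Summing the termwise inequalities and using the two identities above gives $\exp(\alpha,v)\le\exp(\beta,v)$, which is $\alpha\preceq_{\exp}\beta$.

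The step I expect to be delicate is precisely the passage $\beta(d_e)\frac{v(e)}{v(d_e)}\le\beta(d_e)$: multiplying by the factor $\frac{v(e)}{v(d_e)}\le 1$ \emph{increases} a quantity when that quantity is negative, so the inequality runs the right way only for $\beta(d_e)\ge 0$. This is the same non-negativity that underlies reading $\beta(d_e)\frac{v(e)}{v(d_e)}$ as a \emph{reduced} payoff in the motivation of $\preceq_v$, so I would either carry a standing assumption that rewards are non-negative or restrict the statement to such acts. A sign-blind alternative --- multiplying clause~2 directly by $v(e)$ to get $\alpha(e)v(e)\le\beta(d_e)\frac{v(e)^2}{v(d_e)}$ and summing over $E_d$ --- does not save the general case: it produces $\frac{\beta(d)}{v(d)}\sum_{e\in E_d}v(e)^2$, and although $\sum_{e\in E_d}v(e)^2\le\bigl(\sum_{e\in E_d}v(e)\bigr)^2=v(d)^2$ always holds, the factor $\beta(d)$ reverses this estimate when $\beta(d)<0$. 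Indeed, on the two-atom Boolean algebra with $v$ assigning $\tfrac12$ to each atom, $D=\{1\}$ and $\beta(1)=-1$, the act $\alpha$ taking the largest admissible value $-\tfrac12$ on each atom gives $\exp(\alpha,v)=-\tfrac12>-1=\exp(\beta,v)$, so the non-negativity hypothesis is essential rather than cosmetic.
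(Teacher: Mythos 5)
Your argument is essentially the paper's own proof in slightly different bookkeeping: the paper likewise decomposes $\exp(\alpha,v)$ over the blocks $E_d$ (lemma \ref{lemma4} of Appendix \ref{Appendix A}), uses $\sum\{v(x):x\in E_d\}=v(\bigvee E_d)=v(d)$ (theorem \ref{teo1}), and concludes by a termwise comparison built from clause 2 and isotonicity. The only difference is that you rewrite $\exp(\beta,v)$ as $\sum\{\beta(d_e)v(e):e\in E\}$ and compare single terms $\alpha(e)v(e)\leq\beta(d_e)v(e)$, whereas the paper keeps $\exp(\beta,v)$ as a sum over $D$ and bounds each inner sum $\sum\{\alpha(x)v(x):x\in E_d\}$ by $\beta(d)v(d)$.

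The ``delicate step'' you flag is not a defect of your write-up but a genuine gap in the theorem as the paper states it, and the paper's own proof has the same problem in a different place: its first inner inequality, $\sum\{\alpha(x)v(x):x\in E_d\}\leq\sum\{\alpha(x)v(d):x\in E_d\}$, is justified only by $v(x)\leq v(d)$, and multiplying that inequality by $\alpha(x)$ preserves its direction only when $\alpha(x)\geq 0$. Since the paper nowhere restricts rewards to be non-negative (acts are arbitrary functions $E\rightarrow R$), your two-atom counterexample is decisive: with $v$ giving each atom value $\tfrac12$, $D=\{1\}$, $\beta(1)=-1$, and $\alpha\equiv-\tfrac12$ on the atoms, clauses 1 and 2 hold (with equality in clause 2), yet $\exp(\alpha,v)=-\tfrac12>-1=\exp(\beta,v)$. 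So Theorem \ref{teo2.2} is true only under an added hypothesis, and it is worth noting that yours is formally weaker than the one the paper's proof needs: the paper requires $\alpha\geq 0$, you require only $\beta\geq 0$, and by clause 2 together with the surjectivity of $e\mapsto d_e$ (lemma \ref{lemma3}) the former implies the latter, while your chain $\alpha(e)\leq\beta(d_e)\frac{v(e)}{v(d_e)}\leq\beta(d_e)$ remains valid even when $\alpha$ takes negative values. Your analysis of why the ``sign-blind'' variant (summing $\alpha(e)v(e)\leq\beta(d)\frac{v(e)^2}{v(d)}$ over $E_d$) cannot rescue the general case is also correct.
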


\begin{proof}
We suppose that $\alpha :E\rightarrow R$ and $\beta :D\rightarrow R$. We
have 
\begin{eqnarray*}
\exp (\alpha ,v) &=&\sum \{\alpha (e)v(e):e\in E\} \\
&=&\sum \{\sum \{\alpha (x)v(x):x\in E_{d}\}:d\in D\} \\
&\leq &\sum \{\beta (d)v(d):d\in D\} \\
&=&\exp (\beta ,v),
\end{eqnarray*}
where the second line follows because $\{E_{d}:d\in D\}$ is a set theoretic
partition on $E$ (see lemma \ref{lemma4}of Appendix \ref{Appendix A}) and
the third line because 
\begin{eqnarray*}
\sum \{\alpha (x)v(x):x\in E_{d}\}\leq &&\sum \{\alpha (x)v(d):x\in E_{d}\}
\\
\leq &&\sum \{\beta (d)v(x):x\in E_{d}\} \\
= &&\beta (d)\sum \{v(x):x\in E_{d}\} \\
= &&\beta (d)v(d),
\end{eqnarray*}
where the first line follows because $x\leq d$ implies $v(x)\leq v(d)$, as $%
v $ is isotone, and the second line follows because $\alpha \preceq
_{v}\beta $ by hypothesis. In fact, $\alpha \preceq _{v}\beta $ implies $%
\alpha (x)\leq \beta (d)\frac{v(x)}{v(d)}$, for all $x\in E_{d}$, so $\alpha
(x)v(d)\leq \beta (d)v(x)$. The last line follows because $\sum \{v(x):x\in
E_{d}\}=v(\bigvee E_{d})=v(d)$ (see theorem \ref{teo1} of Appendix \ref
{Appendix A}).
\end{proof}

Of course, we cannot substitute implication with equivalence in the
preceding theorem, because $\exp (a,v)\leq \exp (\beta ,v)$ may hold between
acts $\alpha $ and $\beta $ whose domains $E$ and $D$ are such that $E\nleq
D $ and $D\nleq E$. We observe that, as a consequence of the two preceding
theorems, $\alpha \preceq _{E}\beta $ implies $\alpha \preceq _{\exp }\beta $%
.

\begin{theorem}
\label{teo2.3}$(A(\mathcal{A}),\preceq _{v})$ is a partially ordered set.
\end{theorem}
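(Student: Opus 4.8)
The plan is to verify the three axioms of a partial order for $\preceq _{v}$ --- reflexivity, antisymmetry and transitivity --- viewing each $\alpha \in A(\mathcal{A})$ as a function on its own domain $E$. Two ingredients from Appendix \ref{Appendix A} will do the order-theoretic bookkeeping: that the refinement relation $\leq $ is itself a partial order on $\Pi (\mathcal{A})$, and that for $E\leq D$ and $e\in E$ the element $d_{e}\in D$ above $e$ is unique (lemma \ref{lemma2}). Reflexivity I would dispatch immediately: since $E\leq E$ with $e_{e}=e$ for every $e\in E$, clause 2 reads $\alpha (e)\leq \alpha (e)\frac{v(e)}{v(e)}=\alpha (e)$, which holds with equality, so $\alpha \preceq _{v}\alpha $. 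For antisymmetry, I would take $\alpha :E\rightarrow R$, $\beta :D\rightarrow R$ with $\alpha \preceq _{v}\beta $ and $\beta \preceq _{v}\alpha $; clause 1 of the two hypotheses gives $E\leq D$ and $D\leq E$, so $E=D$ by antisymmetry of refinement. Once the domains coincide, Theorem \ref{teo2.1} converts the two hypotheses into $\alpha \preceq _{E}\beta $ and $\beta \preceq _{E}\alpha $, and since $\preceq _{E}$ is the (antisymmetric) dominance order on $A(E)$ this forces $\alpha =\beta $.

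Transitivity is where I expect the real content to sit. Let $\alpha :E\rightarrow R$, $\beta :D\rightarrow R$, $\gamma :F\rightarrow R$ with $\alpha \preceq _{v}\beta $ and $\beta \preceq _{v}\gamma $. From clause 1 of each hypothesis I get $E\leq D\leq F$, hence $E\leq F$ by transitivity of refinement. Fixing $e\in E$, let $d_{e}\in D$ be the unique element above $e$ and $f_{d_{e}}\in F$ the unique element above $d_{e}$; since $e\leq d_{e}\leq f_{d_{e}}$, uniqueness forces the element of $F$ above $e$ to be exactly $f_{e}=f_{d_{e}}$. To chain clause 2, I would use $\alpha (e)\leq \beta (d_{e})\frac{v(e)}{v(d_{e})}$ and $\beta (d_{e})\leq \gamma (f_{d_{e}})\frac{v(d_{e})}{v(f_{d_{e}})}$; multiplying the second inequality by the nonnegative factor $\frac{v(e)}{v(d_{e})}$ and substituting into the first yields
\[
\alpha (e)\;\leq \;\gamma (f_{d_{e}})\,\frac{v(d_{e})}{v(f_{d_{e}})}\cdot \frac{v(e)}{v(d_{e})}\;=\;\gamma (f_{e})\,\frac{v(e)}{v(f_{e})},
\]
which is precisely clause 2 for $\alpha \preceq _{v}\gamma $.

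The step I expect to be the main obstacle is the legitimacy of the manipulations with the ratios $\frac{v(e)}{v(d_{e})}$ in the transitivity argument: the chaining relies on $v$ being nonnegative (so the factors preserve the direction of the inequalities) and on the denominators being nonzero, so that the cancellation $\frac{v(d_{e})}{v(f_{d_{e}})}\cdot \frac{v(e)}{v(d_{e})}=\frac{v(e)}{v(f_{d_{e}})}$ is valid. This is exactly the place where the standing hypothesis on $v$ (isotone, with positive values on the relevant conditions) must be invoked; the purely combinatorial facts --- antisymmetry and transitivity of refinement and the identification $f_{e}=f_{d_{e}}$ --- are then routine consequences of the lemmas in Appendix \ref{Appendix A}.
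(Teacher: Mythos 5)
Your proof is correct and follows essentially the same route as the paper's: verify the three axioms, using antisymmetry and transitivity of the refinement order on $\Pi (\mathcal{A})$ together with the uniqueness of $d_{e}$ (lemma \ref{lemma2}) to identify $f_{e}=f_{d_{e}}$ and chain the ratio inequalities exactly as the paper does. The only cosmetic difference is in antisymmetry, where you invoke Theorem \ref{teo2.1} to reduce to the dominance order on $A(E)$ instead of repeating the computation $\alpha (e)\leq \beta (e)\frac{v(e)}{v(e)}=\beta (e)$ directly; your closing remark about needing $v$ positive on conditions (so the ratios are defined and order-preserving) is a hypothesis the paper uses silently as well.
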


\begin{proof}
Reflexivity. For all act $\alpha :E\rightarrow R$, we have: 1) $E\leq E$,
because $\leq $ is a partial order on the set of all partitions on $\mathcal{%
A}$ (see theorem \ref{teopartlatt} of Appendix \ref{Appendix A}); 2) for all 
$e\in E$, $\alpha (e)\leq \alpha (e_{e})\frac{v(e)}{v(e_{e})}$ because, for
all $e\in E$, $e_{e}=e$ (the only $x\in E$ such that $e\leq x$ being $e$
itself), so $\frac{v(e)}{v(e_{e})}=1$. This proves that $\alpha \preceq
_{v}\alpha $.

Transitivity. We assume that $\alpha \preceq _{v}\beta $ and $\beta \preceq
_{v}\gamma $, where $\alpha :E\rightarrow R$, $\beta :D\rightarrow R$ and $%
\gamma :G\rightarrow R$. 1) By hypothesis, $E\leq D$ and $D\leq G$, so $%
E\leq G$, because $\leq $ is a partial order on partitions. 2) By
hypothesis, for all $e\in E$, we have $\alpha (e)\leq \beta (d_{e})\frac{v(e)%
}{v(d_{e})}$ and for all $d\in D$ we have $\beta (d)\leq \gamma (g_{d})\frac{%
v(d)}{v(g_{d})}$: in particular, $\beta (d_{e})\leq \gamma (g_{d_{e}})\frac{%
v(d_{e})}{v(g_{d_{e}})}$. By definition, $e\leq d_{e}\leq g_{d_{e}}$ and $%
e\leq g_{e}$, so $g_{d_{e}}=g_{e}$, because there is only one $x\in G$ such
that $e\leq x$, so $\beta (d_{e})\leq \gamma (g_{e})\frac{v(d_{e})}{v(g_{e})}
$. Then we have 
\begin{equation*}
\alpha (e)\leq \beta (d_{e})\frac{v(e)}{v(d_{e})}\leq \gamma (g_{e})\frac{%
v(d_{e})}{v(g_{e})}\frac{v(e)}{v(d_{e})}=\gamma (g_{e})\frac{v(e)}{v(g_{e})}.
\end{equation*}

This proves that $\alpha \preceq _{v}\gamma $.

Antisymmetry. We assume that $\alpha \preceq _{v}\beta $ and $\beta \preceq
_{v}\alpha $. 1) By hypothesis, $E\leq D$ and $D\leq E$, so $E=D$, because $%
\leq $ is a partial order on partitions. 2) By hypothesis, for all $e\in E$,
we have $\alpha (e)\leq \beta (d_{e})\frac{v(e)}{v(d_{e})}$ and for all $%
d\in D$ we have $\beta (d)\leq \alpha (e_{d})\frac{v(d)}{v(e_{d})}$. As $E=D$%
, $\alpha (e)\leq \beta (e)\frac{v(e)}{v(e)}\ =\beta (e)$. For the same
reason, we have $\beta (e)\leq \alpha (e)\frac{v(e)}{v(e)}=\alpha (e)$, so $%
\alpha (e)=\beta (e)$. This proves that $\alpha =\beta $.
\end{proof}

\section{The lattice of acts}

Given an act $\beta :D\rightarrow R$ and a partition $E$ such that $E\leq D$%
, we can downgrade $\beta $ to an act $\beta _{E}:E\rightarrow R$ setting,
for all $e\in E$%
\begin{equation*}
\beta _{E}(e)=\beta (d_{e})\frac{v(e)}{v(d_{e})}.
\end{equation*}

We have $\beta _{E}\preceq _{v}\beta $ by definition of $\beta _{E}$. In
fact, $\beta _{E}$ is the best approximation from below to $\beta $ in $A(E)$%
, as shown in corollary \ref{cor4.1}.

\begin{lemma}
$(A(\mathcal{A}),\preceq _{v})$ is closed with respect to $\inf $.
\end{lemma}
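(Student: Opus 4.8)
The plan is to exhibit an explicit candidate for $\inf (\alpha ,\beta )$ and then verify the universal property of the meet. Suppose $\alpha :E\rightarrow R$ and $\beta :D\rightarrow R$. Since $(\Pi (\mathcal{A}),\leq )$ is a lattice (theorem \ref{teopartlatt} of Appendix \ref{Appendix A}), let $G$ be the meet of $E$ and $D$ in $\Pi (\mathcal{A})$, so that $G\leq E$ and $G\leq D$. Using the downgrading operation introduced above, form $\alpha _{G},\beta _{G}\in A(G)$, where for $g\in G$ we set $\alpha _{G}(g)=\alpha (e_{g})\frac{v(g)}{v(e_{g})}$ and $\beta _{G}(g)=\beta (d_{g})\frac{v(g)}{v(d_{g})}$, with $e_{g}$ (resp. $d_{g}$) the unique element of $E$ (resp. $D$) above $g$ (uniqueness by lemma \ref{lemma2} of Appendix \ref{Appendix A}). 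Finally define $\gamma :G\rightarrow R$ by $\gamma (g)=\min (\alpha _{G}(g),\beta _{G}(g))$. I claim $\gamma =\inf (\alpha ,\beta )$.

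First I would check that $\gamma $ is a lower bound, i.e. $\gamma \preceq _{v}\alpha $ and $\gamma \preceq _{v}\beta $. Clause 1 holds because $G\leq E$ and $G\leq D$. For clause 2 of $\gamma \preceq _{v}\alpha $, note that for every $g\in G$ we have $\gamma (g)\leq \alpha _{G}(g)=\alpha (e_{g})\frac{v(g)}{v(e_{g})}$ directly from the definition of $\min $; the argument for $\beta $ is symmetric.

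The substantive part is to show $\gamma $ is the greatest lower bound. Let $\delta :H\rightarrow R$ be any act with $\delta \preceq _{v}\alpha $ and $\delta \preceq _{v}\beta $. Then $H\leq E$ and $H\leq D$, whence $H\leq G$ by the defining property of the meet in $\Pi (\mathcal{A})$; this is clause 1 of $\delta \preceq _{v}\gamma $. For clause 2, fix $h\in H$ and let $g_{h}\in G$, $e_{h}\in E$ be the unique elements above $h$. From $h\leq g_{h}\leq e_{g_{h}}$ and the uniqueness of the element of $E$ above $h$ we get $e_{g_{h}}=e_{h}$, so the weights telescope:
\begin{equation*}
\alpha _{G}(g_{h})\frac{v(h)}{v(g_{h})}=\alpha (e_{h})\frac{v(g_{h})}{v(e_{h})}\frac{v(h)}{v(g_{h})}=\alpha (e_{h})\frac{v(h)}{v(e_{h})}.
\end{equation*}
Since $\delta \preceq _{v}\alpha $ gives $\delta (h)\leq \alpha (e_{h})\frac{v(h)}{v(e_{h})}$, we obtain $\delta (h)\leq \alpha _{G}(g_{h})\frac{v(h)}{v(g_{h})}$, and symmetrically $\delta (h)\leq \beta _{G}(g_{h})\frac{v(h)}{v(g_{h})}$. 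As the factor $\frac{v(h)}{v(g_{h})}$ is nonnegative, $\min $ commutes with multiplication by it, so $\delta (h)\leq \min (\alpha _{G}(g_{h}),\beta _{G}(g_{h}))\frac{v(h)}{v(g_{h})}=\gamma (g_{h})\frac{v(h)}{v(g_{h})}$. This is clause 2, hence $\delta \preceq _{v}\gamma $ and $\gamma =\inf (\alpha ,\beta )$.

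The main obstacle I anticipate is bookkeeping rather than a conceptual gap: one must keep straight the three nested partitions $H\leq G\leq E$ and repeatedly invoke the uniqueness of the element lying above a given block (lemma \ref{lemma2}) so that the chains $e_{g_{h}}=e_{h}$ and $d_{g_{h}}=d_{h}$ line up and the valuation weights genuinely telescope. The only place positivity of $v$ enters is in forming the ratios $\frac{v(\cdot )}{v(\cdot )}$ and in pulling the nonnegative factor out of $\min $; if $v$ is only isotone one should record that the relevant denominators $v(e_{g})$, $v(d_{g})$, $v(g_{h})$ are nonzero exactly when the corresponding blocks are, as is implicit throughout the paper's use of $\preceq _{v}$.
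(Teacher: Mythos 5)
Your proposal is correct and follows essentially the same route as the paper: both downgrade $\alpha$ and $\beta$ to the meet partition $E\wedge D$, take the pointwise minimum as the candidate infimum, and verify the universal property via the uniqueness of the block above a given element (lemma \ref{lemma2}) so that the valuation weights telescope. The only differences are notational (your $G$, $\delta$, $H$ play the roles of the paper's $E\wedge D$, $\gamma$, $G$), plus your explicit remark about nonvanishing denominators, which the paper leaves implicit.
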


\begin{proof}
We know that $\Pi (A)$ is a lattice, by theorem \ref{teopartlatt} of
Appendix \ref{Appendix A}. Given $\alpha :E\rightarrow R$ and $\beta
:D\rightarrow R$, we define $\phi :E\wedge D\rightarrow R$ setting, for all $%
z\in E\wedge D$, 
\begin{equation*}
\phi (z)=\min (\alpha _{E\wedge D}(z),\beta _{E\wedge D}(z)).
\end{equation*}
We show that $\phi =\inf (\alpha ,\beta )$.

1. $\phi \preceq _{v}\alpha $, $\beta $. By definition, we have $E\wedge
D\leq E$. Then we have, for all $z\in E\wedge D$, 
\begin{equation*}
\phi (z)=\min (\alpha (e_{z})\frac{v(z)}{v(e_{z})},\beta (d_{z})\frac{v(z)}{%
v(d_{z})})\leq \alpha (e_{z})\frac{v(z)}{v(e_{z})},
\end{equation*}
so $\phi \preceq _{v}\alpha $. In the same way, we can prove that $\phi
\preceq _{v}\beta $

2. We prove that, for all $\gamma :G\rightarrow R$, if $\gamma \preceq
_{v}\alpha $ and $\gamma \preceq _{v}\beta $, then $\gamma \preceq _{v}\phi $%
. By hypothesis $G\leq E$ and $G\leq D$, so $G\leq E\wedge D$. We have to
show that, for all $g\in G$, $\gamma (g)\leq \phi (z_{g})\frac{v(g)}{v(z_{g})%
}$ where $z_{g}$ is the element of $E\wedge D$ such that $g\leq z_{g}$. By
definition of $\phi $, this amounts to prove that 
\begin{eqnarray*}
\gamma (g) &\leq &\min (\alpha _{E\wedge D}(z_{g}),\beta _{E\wedge D}(z_{g}))%
\frac{v(g)}{v(z_{g})} \\
&=&\min (\alpha (e_{z_{g}})\frac{v(z_{g})}{v(e_{z_{g}})},\beta (d_{z_{g}})%
\frac{v(z_{g})}{v(d_{z_{g}})})\frac{v(g)}{v(z_{g})} \\
&=&\min (\frac{\alpha (e_{z_{g}})}{v(e_{z_{g}})},\frac{\beta (d_{z_{g}})}{%
v(d_{z_{g}})})v(g),
\end{eqnarray*}
where $z_{g}\leq e_{z_{g}}$ and $z_{g}\leq d_{z_{g}}$. By hypothesis, for
all $g\in G$, we have $\gamma (g)\leq \alpha (e_{g})\frac{v(g)}{v(e_{g})}$
and $\gamma (g)\leq \beta (d_{g})\frac{v(g)}{v(d_{g})}$, so 
\begin{equation*}
\gamma (g)\leq \min (\frac{\alpha (e_{g})}{v(e_{g})},\frac{\beta (d_{g})}{%
v(d_{g})})v(g).
\end{equation*}
We conclude the proof by observing that, from $g\leq e_{g}$ and $g\leq
z_{g}\leq e_{z_{g}}$, we can derive $e_{g}=e_{z_{g}}$, because $G$ and $E$
are partitions. In the same way, from $g\leq d_{g}$ and $g\leq z_{g}\leq
d_{z_{g}},$ we have $d_{g}=d_{z_{g}}$. This proves that $\phi $ is the
greatest lower bound of $\alpha $ and $\beta $.
\end{proof}

Given an act $\beta :D\rightarrow R$ and a partition $G$ such that $D\leq G$%
, we can upgrade $\beta $ to an act $\beta ^{G}:G\rightarrow R$ setting, for
all $g\in G$, 
\begin{equation*}
\beta ^{G}(g)=\max (\beta (x)\frac{v(g)}{v(x)}:x\in D_{g}).
\end{equation*}

We have $\beta \preceq _{v}\beta ^{G}$. In fact $d\in D_{g_{d}}$ so $\beta
(d)\frac{v(g)}{v(d)}\leq \beta ^{G}(g)$ and then $\beta (d)\leq \beta ^{G}(g)%
\frac{v(d)}{v(g)}$, thus proving that $\beta \preceq _{v}\beta ^{G}$. From
corollary \ref{cor4.1} we can see that $\beta ^{G}$ is the best
approximation from above to $\beta $ in $A(G)$.

\begin{lemma}
$(A(\mathcal{A}),\preceq _{v})$ is closed with respect to $\sup $.
\end{lemma}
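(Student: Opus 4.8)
The plan is to dualize the proof of the preceding lemma, replacing the common refinement $E\wedge D$ by the common coarsening $E\vee D$ and the downgrade operation by the upgrade operation just introduced. Since $\Pi(A)$ is a lattice (theorem \ref{teopartlatt}), for acts $\alpha:E\rightarrow R$ and $\beta:D\rightarrow R$ we have both $E\leq E\vee D$ and $D\leq E\vee D$, so the upgraded acts $\alpha^{E\vee D}$ and $\beta^{E\vee D}$ are well defined on $E\vee D$. I would then define $\psi:E\vee D\rightarrow R$ by $\psi(w)=\max(\alpha^{E\vee D}(w),\beta^{E\vee D}(w))$ for all $w\in E\vee D$, and prove that $\psi=\sup(\alpha,\beta)$.

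First I would check that $\psi$ is an upper bound, i.e. $\alpha\preceq_v\psi$ and $\beta\preceq_v\psi$. The refinement clause $E\leq E\vee D$ is immediate. For the inequality, fix $e\in E$ and let $w_e$ be the unique element of $E\vee D$ with $e\leq w_e$ (uniqueness by lemma \ref{lemma2}). Since $e\in E_{w_e}=\{x\in E:x\leq w_e\}$, the definition of the upgrade gives $\alpha^{E\vee D}(w_e)\geq\alpha(e)\frac{v(w_e)}{v(e)}$, hence $\psi(w_e)\geq\alpha(e)\frac{v(w_e)}{v(e)}$, which rearranges to $\alpha(e)\leq\psi(w_e)\frac{v(e)}{v(w_e)}$; this is exactly clause 2 for $\alpha\preceq_v\psi$, and the argument for $\beta$ is identical. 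This half is in any case largely contained in the remark $\beta\preceq_v\beta^{G}$ preceding the statement, since taking $\max$ only raises each component.

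The substantive part is the universal property: for every $\gamma:G\rightarrow R$ with $\alpha\preceq_v\gamma$ and $\beta\preceq_v\gamma$, I must show $\psi\preceq_v\gamma$. The refinement clause follows from $E\leq G$, $D\leq G$ and the fact that $E\vee D$ is the least upper bound in $\Pi(A)$, giving $E\vee D\leq G$. For clause 2, fix $w\in E\vee D$ and let $g_w$ be the element of $G$ with $w\leq g_w$; I must bound $\psi(w)$ by $\gamma(g_w)\frac{v(w)}{v(g_w)}$. Here lies the main obstacle, and it is the one genuine difference from the $\inf$ case: because the upgrade is a maximum over the whole block $E_w=\{x\in E:x\leq w\}$ rather than a single downgraded value, the bound must be verified simultaneously for every $x\in E_w$. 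The key observation is that each such $x$ satisfies $x\leq w\leq g_w$, so by uniqueness the element of $G$ above $x$ is forced to be $g_w$ itself; then $\alpha\preceq_v\gamma$ yields $\alpha(x)\leq\gamma(g_w)\frac{v(x)}{v(g_w)}$, and multiplying by $\frac{v(w)}{v(x)}$ gives $\alpha(x)\frac{v(w)}{v(x)}\leq\gamma(g_w)\frac{v(w)}{v(g_w)}$, with a right-hand side independent of $x$. Taking the maximum over $E_w$ bounds $\alpha^{E\vee D}(w)$, and the symmetric argument bounds $\beta^{E\vee D}(w)$ by the same quantity, so their maximum $\psi(w)$ is bounded as well. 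This establishes $\psi\preceq_v\gamma$ and hence that $\psi$ is the greatest element below every common upper bound, i.e. the least upper bound of $\alpha$ and $\beta$.
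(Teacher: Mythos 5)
Your proposal is correct and follows essentially the same route as the paper's own proof: you define the candidate supremum as the pointwise maximum of the two upgraded acts on $E\vee D$, verify the upper-bound property via $e\in E_{w_e}$, and prove the universal property by the same key step — for $x\in E_w$, uniqueness of the element of $G$ above $x$ forces $g_x=g_w$, making the bound $\gamma(g_w)\frac{v(w)}{v(g_w)}$ uniform over the block before taking the maximum. The only cosmetic differences are notational (your $\psi$, $w$ versus the paper's $\phi$, $z$) and your observation that the upper-bound half also follows from the remark $\beta\preceq_v\beta^{G}$ together with pointwise comparison, which the paper instead verifies directly.
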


\begin{proof}
Given $\alpha :E\rightarrow R$ and $\beta :D\rightarrow R$, we define $\phi
:E\vee D\rightarrow R$ setting, for all $z\in E\vee D$, 
\begin{equation*}
\phi (z)=\max (\alpha ^{E\vee D}(z),\beta ^{E\vee D}(z)).
\end{equation*}
We show that $\phi =\sup (\alpha ,\beta )$.

1. $\alpha $, $\beta \preceq _{v}\phi $. In order to show that $\alpha
\preceq _{v}\phi $, we must prove that, for all $e\in E$, $\alpha (e)\leq
\phi (z_{e})\frac{v(e)}{v(z_{e})}.$ We have 
\begin{eqnarray*}
\phi (z_{e}) &=&\max (\alpha ^{E\vee D}(z_{e}),\beta ^{E\vee D}(z_{e})) \\
&=&\max (v(z_{e})\max (\frac{\alpha (x)}{v(x)}:x\in E_{z_{e}}),v(z_{e})\max (%
\frac{\beta (x)}{v(x)}:x\in D_{z_{e}})) \\
&\geq &v(z_{e})\max (\frac{\alpha (x)}{v(x)}:x\in E_{z_{e}}),
\end{eqnarray*}
so 
\begin{eqnarray*}
\phi (z_{e})\frac{v(e)}{v(z_{e})} &\geq &v(z_{e})\max (\frac{\alpha (x)}{v(x)%
}:x\in E_{z_{e}})\frac{v(e)}{v(z_{e})} \\
&=&v(e)\max (\frac{\alpha (x)}{v(x)}:x\in E_{z_{e}}) \\
&\geq &v(e)\frac{\alpha (e)}{v(e)} \\
&=&\alpha (e),
\end{eqnarray*}
where the third line follows because $e\in E_{z_{e}}$ and then $\max (\frac{%
\alpha (x)}{v(x)}:x\in E_{z_{e}})\geq \frac{\alpha (e)}{v(e)}$. In the same
way we can prove that $\beta \preceq _{v}\phi $.

2. For all $\gamma :G\rightarrow R$, if $\alpha \preceq _{v}\gamma $ and $%
\beta \preceq _{v}\gamma $, then $\phi \preceq _{v}\gamma $. By hypothesis $%
E\leq G$ and $D\leq G$, so $E\vee D\leq G$. We have to show that, for all $%
z\in E\vee D$, $\phi (z)\leq \gamma (g_{z})\frac{v(z)}{v(g_{z})}$. By
hypothesis we have, for all $e\in E$, $\alpha (e)\leq \gamma (g_{e})\frac{%
v(e)}{v(g_{e})}$ so $\alpha (e)\frac{v(g_{e})}{v(e)}\leq \gamma (g_{e})$. In
particular, for all $x\in E_{z}$ we have $\alpha (x)\frac{v(g_{x})}{v(x)}%
\leq \gamma (g_{x})$. When $x\in E_{z}$ we have also $x\leq z\leq g_{z}$ and 
$x\leq g_{x}$, so $g_{z}=g_{x}$ because $G$ and $E$ are partitions. Then we
can conclude that, for all $x\in E_{z}$, $\alpha (x)\frac{v(g_{z})}{v(x)}%
\leq \gamma (g_{z})$ and so 
\begin{equation*}
\max (\alpha (x)\frac{v(g_{z})}{v(x)}:x\in E_{z})\leq \gamma (g_{z})
\end{equation*}
and 
\begin{equation*}
v(z)\max (\frac{\alpha (x)}{v(x)}:x\in E_{z})\leq \gamma (g_{z})\frac{v(z)}{%
v(g_{z})}.
\end{equation*}
In the same way, from the hypothesis for all $d\in D$, $\beta (d)\leq \gamma
(g_{d})\frac{v(d)}{v(g_{d})}$, we can prove that 
\begin{equation*}
v(z)\max (\frac{\beta (x)}{v(x)}:x\in D_{z})\leq \gamma (g_{z})\frac{v(z)}{%
v(g_{z})},
\end{equation*}
so we can conclude that 
\begin{eqnarray*}
\gamma (g_{z})\frac{v(z)}{v(g_{z})} &\geq &\max (v(z)\max (\frac{\alpha (x)}{%
v(x)}:x\in E_{z}),v(z)\max (\frac{\beta (x)}{v(x)}:x\in D_{z})) \\
&=&\max (\alpha ^{E\vee D}(z),\beta ^{E\vee D}(z)) \\
&=&\phi (z).
\end{eqnarray*}
\end{proof}

\begin{theorem}
\label{teo4}$(A(\mathcal{A}),\preceq _{v})$ is a lattice with $(A(E),\preceq
_{E})$ as a sublattice.
\end{theorem}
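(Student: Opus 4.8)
The plan is to assemble the lattice structure directly from the three results just established. Since Theorem \ref{teo2.3} already shows that $(A(\mathcal{A}),\preceq_v)$ is a partial order, and the two preceding lemmas show that every pair of acts has both an infimum and a supremum with respect to $\preceq_v$, the conjunction of these facts is precisely the definition of a lattice. So the first half of the statement requires no new argument beyond citing these results.

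For the sublattice claim I would first record the two easy inclusions: $A(E)\subseteq A(\mathcal{A})$ holds by the definition $A(\mathcal{A})=\bigcup\{A(E'):E'\in\Pi(\mathcal{A})\}$, and by Theorem \ref{teo2.1} the relation $\preceq_E$ is exactly the restriction of $\preceq_v$ to $A(E)$. The substantive point is that $(A(E),\preceq_E)$ is \emph{closed} under the lattice operations of $(A(\mathcal{A}),\preceq_v)$, i.e.\ that for $\alpha,\beta\in A(E)$ the meet and join constructed in the two lemmas land back in $A(E)$ and coincide with the pointwise $\min$ and $\max$ that give $A(E)$ its own lattice structure.

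The key observation I would use is that a partition is a refinement of itself, so $E\leq E$ and hence $E\wedge E=E\vee E=E$. On this common domain both the downgrade and the upgrade act as the identity: for $e\in E$ the unique element of $E$ above $e$ is $e$ itself, whence $\alpha_E(e)=\alpha(e)\frac{v(e)}{v(e)}=\alpha(e)$, and since the only partition element $\leq e$ is again $e$, also $\alpha^E(e)=\alpha(e)$. Substituting these identities (with $D=E$) into the formulas $\phi(z)=\min(\alpha_{E\wedge D}(z),\beta_{E\wedge D}(z))$ and $\psi(z)=\max(\alpha^{E\vee D}(z),\beta^{E\vee D}(z))$ from the two lemmas collapses them to $\phi(z)=\min(\alpha(z),\beta(z))$ and $\psi(z)=\max(\alpha(z),\beta(z))$ for all $z\in E$. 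These are exactly the infimum and supremum of $\alpha$ and $\beta$ in $(A(E),\preceq_E)$ recorded earlier, so the two lattice structures agree and $(A(E),\preceq_E)$ is a sublattice.

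I expect no serious obstacle here, since the entire difficulty of building the lattice was already absorbed into Theorem \ref{teo2.3} and the two closure lemmas. The only care needed is the bookkeeping in the last paragraph, namely verifying that the downgrade and upgrade degenerate to the identity on the common domain, and this rests solely on the fact that distinct elements of a partition are disjoint and nonzero, so that the unique partition element comparable to a given block is the block itself.
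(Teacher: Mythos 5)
Your proof is correct, and it reaches the sublattice claim by a genuinely different route than the paper. For the closure of $A(E)$ under the ambient meet and join, the paper never looks inside the constructions of the two closure lemmas: it takes the pointwise $\min$ (resp.\ $\max$) of $\alpha ,\beta \in A(E)$, notes it is a lower (resp.\ upper) bound in $(A(\mathcal{A}),\preceq _{v})$ via Theorem \ref{teo2.1}, and then verifies the universal property directly, i.e.\ for an arbitrary $\xi :G\rightarrow R$ with $G\leq E$ and $\xi \preceq _{v}\alpha ,\beta $ it computes $\xi (g)\leq \min (\alpha (e_{g}),\beta (e_{g}))\frac{v(g)}{v(e_{g})}$, and dually for upper bounds. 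You instead specialize the lemmas' explicit formulas $\phi (z)=\min (\alpha _{E\wedge D}(z),\beta _{E\wedge D}(z))$ and $\psi (z)=\max (\alpha ^{E\vee D}(z),\beta ^{E\vee D}(z))$ to $D=E$, observe $E\wedge E=E\vee E=E$ and that downgrade and upgrade are the identity on an act's own domain, and conclude by uniqueness of meets and joins in a poset. Both arguments are sound; your key identity $\alpha _{E}=\alpha =\alpha ^{E}$ (justified, as you note, by $e_{e}=e$ and $E_{e}=\{e\}$, which follow from disjointness and nonzeroness of blocks) is in fact exactly the lemma the paper proves later, in Section 5, so nothing you use is unavailable. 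What each approach buys: yours is more economical, recycling the lemmas' constructions and avoiding a second universal-property verification; the paper's is independent of how the infima and suprema were constructed, using only their existence together with Theorem \ref{teo2.1}, and it makes explicit how an arbitrary $\preceq _{v}$-bound living on a strictly finer or coarser partition compares with the pointwise $\min $/$\max $, which is the fact reused afterwards in Corollary \ref{cor4.1}.
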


\begin{proof}
$(A(\mathcal{A}),\preceq _{v})$ is a lattice by the preceding lemmas. We
prove that $\inf $ $(A(E),\preceq _{E})$ is a sublattice of $(A(\mathcal{A}%
),\preceq _{v})$. In the first place, we show that for all $\alpha $, $\beta
\in A(E)$, $\inf_{E}(\alpha ,\beta )=\inf_{v}(\alpha ,\beta )$, where $%
\inf_{E}$ denotes $\inf $ in $A(E)$ and $\inf_{v}$ denotes $\inf $ in $A(%
\mathcal{A})$. We observe that $\inf_{E}(\alpha ,\beta )$ is a lower bound
of $\{\alpha ,\beta \}$ in $(A(\mathcal{A}),\preceq _{v})$ because $%
\inf_{E}(\alpha ,\beta )\preceq _{E}\alpha $, $\beta $ implies $%
\inf_{E}(\alpha ,\beta )\preceq _{v}\alpha $, $\beta $, by theorem \ref
{teo2.1}. We can see that $\inf_{E}(\alpha ,\beta )$ is the greatest lower
bound of $\{\alpha ,\beta \}$ in $(A(\mathcal{A}),\preceq _{v})$ as follows:
we suppose that $\xi \preceq _{v}\alpha $, $\beta $, where $\xi \in A(%
\mathcal{A})$ is an act $\xi :G\rightarrow R$ where $G\leq E$, and we show
that $\xi \preceq _{v}\inf_{E}(\alpha ,\beta )$. So we must show that, for
all $g\in G$, $\xi (g)\leq \inf_{E}(\alpha ,\beta )(e_{g})\frac{v(g)}{%
v(e_{g})}=\min (\alpha (e_{g}),\beta (e_{g}))\frac{v(g)}{v(e_{g})}$. (We
remember that $\alpha $ and $\beta $ are functions $E\rightarrow R$ and $%
\inf_{E}$ is defined pointwise.) By hypothesis, $\xi (g)\leq \alpha (e_{g})%
\frac{v(g)}{v(e_{g})}$ and $\xi (g)\leq \beta (e_{g})\frac{v(g)}{v(e_{g})}$,
so 
\begin{eqnarray*}
\xi (g) &\leq &\min (\alpha (e_{g})\frac{v(g)}{v(e_{g})},\beta (e_{g})\frac{%
v(g)}{v(e_{g})}) \\
&=&\min (\alpha (e_{g}),\beta (e_{g}))\frac{v(g)}{v(e_{g})},
\end{eqnarray*}
where the second line follows because $\min (ax,bx)=\min (a,b)x$ when $x\geq
0$. Finally, we show that $\sup_{E}(\alpha ,\beta )=\sup_{v}(\alpha ,\beta )$%
. On one side, $\sup_{E}(\alpha ,\beta )$ is an upper bound of $\{\alpha
,\beta \}$ in $(A(\mathcal{A}),\preceq _{v})$, because $\alpha $, $\beta
\preceq _{E}\sup_{E}(\alpha ,\beta )$ and so $\alpha $, $\beta \preceq
_{v}\sup_{E}(\alpha ,\beta )$ by theorem \ref{teo2.1}. On the other side,
for all $\xi \in A(\mathcal{A})$ such that $\alpha $, $\beta \preceq _{v}\xi 
$, we can show that $\sup_{E}(\alpha ,\beta )$ $\preceq _{v}\xi $. In fact, $%
\xi :G\rightarrow R$ for some $G\geq E$, so by hypothesis we have $\alpha
(e)\leq \xi (g_{e})\frac{v(e)}{v(g_{e})}$ and $\beta (e)\leq \xi (g_{e})%
\frac{v(e)}{v(g_{e})}$. Then 
\begin{equation*}
\sup_{E}(\alpha ,\beta )(e)=\max (\alpha (e),\beta (e))\leq \xi (g_{e})\frac{%
v(e)}{v(g_{e})}.
\end{equation*}
thus proving that $\sup_{E}(\alpha ,\beta )$ $\preceq _{v}\xi $.
\end{proof}

\begin{corollary}
\hfill

\begin{enumerate}
\item  If $\beta :D\rightarrow R$ and $E\leq D$, then $\beta _{E}=\bigvee
\{\xi \in A(E):\xi \preceq _{v}\beta \}$, where $\bigvee $ is taken in $A(%
\mathcal{A})$.

\item  If $\beta :D\rightarrow R$ and $D\leq G$, then $\beta ^{G}=\bigwedge
\{\xi \in A(G):\beta \preceq _{v}\xi \}$, where $\bigwedge $ is taken in $A(%
\mathcal{A})$. \label{cor4.1}
\end{enumerate}
\end{corollary}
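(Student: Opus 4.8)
The plan is to observe that each of the two formulas identifies $\beta_E$ (resp.\ $\beta^G$) as the greatest (resp.\ least) element of the set it is claimed to be the join (resp.\ meet) of, and that the greatest element of a subset of a poset is automatically its supremum in any larger poset containing it. So for part 1 it suffices to establish that $\beta_E$ belongs to $S=\{\xi\in A(E):\xi\preceq_v\beta\}$ and dominates every member of $S$, and dually for part 2.

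For part 1, first note $\beta_E\in S$: indeed $\beta_E\in A(E)$ by construction, and $\beta_E\preceq_v\beta$ was already verified right after the definition of $\beta_E$. It remains to show $\beta_E$ is an upper bound of $S$. Take any $\xi\in A(E)$ with $\xi\preceq_v\beta$. Since $\xi$ and $\beta_E$ share the domain $E$, Theorem~\ref{teo2.1} lets me replace $\preceq_v$ by the pointwise order $\preceq_E$, so it is enough to check $\xi(e)\leq\beta_E(e)$ for all $e\in E$. But unfolding $\xi\preceq_v\beta$ gives exactly $\xi(e)\leq\beta(d_e)\frac{v(e)}{v(d_e)}$, and the right-hand side is by definition $\beta_E(e)$. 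Hence $\xi\preceq_E\beta_E$, so $\xi\preceq_v\beta_E$, and $\beta_E$ is the maximum of $S$, i.e.\ $\beta_E=\bigvee S$ in $A(\mathcal{A})$.

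For part 2, dually, $\beta^G\in T=\{\xi\in A(G):\beta\preceq_v\xi\}$ because $\beta^G\in A(G)$ and $\beta\preceq_v\beta^G$ was already shown. To see $\beta^G$ is a lower bound of $T$, take $\xi\in A(G)$ with $\beta\preceq_v\xi$; again by Theorem~\ref{teo2.1} it suffices to verify $\beta^G(g)\leq\xi(g)$ for every $g\in G$. Fix $g$ and any $x\in D_g$; since $x\leq g$ and $G$ is a partition, the unique element of $G$ above $x$ is $g$ itself, so the hypothesis $\beta\preceq_v\xi$ yields $\beta(x)\leq\xi(g)\frac{v(x)}{v(g)}$, i.e.\ $\beta(x)\frac{v(g)}{v(x)}\leq\xi(g)$. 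Taking the maximum over $x\in D_g$ gives $\beta^G(g)\leq\xi(g)$ by the definition of $\beta^G$. Thus $\beta^G\preceq_G\xi$, hence $\beta^G\preceq_v\xi$, so $\beta^G$ is the minimum of $T$, i.e.\ $\beta^G=\bigwedge T$ in $A(\mathcal{A})$.

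I do not expect any genuine obstacle here: both halves are essentially the same short computation run once upward and once downward. The only points requiring a little care are the elementary poset fact used to pass from ``greatest element of $S$'' to ``supremum taken in $A(\mathcal{A})$'' (which holds because the greatest element of $S$ bounds all of $S$ and itself lies in $S$), and the identification $g_x=g$ for $x\in D_g$ in part 2, which is just the uniqueness of the covering element in a partition refinement recorded in Appendix~\ref{Appendix A}.
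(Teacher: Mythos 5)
Your proof is correct and follows essentially the same route as the paper's: both verify that $\beta_E$ (resp.\ $\beta^G$) lies in the relevant set and bounds it, via the same unfolding of $\preceq_v$, the same identification $g_x=g$ for $x\in D_g$, and the same appeal to Theorem~\ref{teo2.1} to pass between $\preceq_E$ (resp.\ $\preceq_G$) and $\preceq_v$. If anything, your explicit ``greatest element of $S$ is its supremum in any ambient poset'' framing handles the claim that $\bigvee$ is taken in all of $A(\mathcal{A})$ a bit more transparently than the paper, which phrases the second half in terms of upper bounds $\delta$ with domain $E$ (resp.\ $G$) and only remarks afterwards that the join can be taken indifferently in $A(E)$ or $A(\mathcal{A})$.
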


\begin{proof}
1. On one side, we show that $\ \beta _{E}$ is an upper bound of $\{\xi \in
A(E):\xi \preceq _{v}\beta \}$. If $\xi :E\rightarrow R$ is such that $\xi
\preceq _{v}\beta $ then, for all $e\in E$, $\xi (e)\leq \beta (d_{e})\frac{%
v(e)}{v(d_{e})}=\beta _{E}(e)$. So $\xi \preceq _{E}\beta _{E}$ and then\ $%
\xi \preceq _{v}\beta _{E}$ by theorem \ref{teo2.1}. On the other side, let $%
\delta :E\rightarrow R$ be an upper bound of $\{\xi \in A(E):\xi \preceq
_{v}\beta \}$, then $\beta _{E}\preceq _{v}\delta $ because $\beta
_{E}\preceq _{v}\beta $, by definition of $\beta _{E}$.

2. On one side, we show that $\ \beta ^{G}$ is a lower bound of $\{\xi \in
A(G):\beta \preceq _{v}\xi \}$. If $\xi :G\rightarrow R$ is such that $\beta
\preceq _{v}\xi $ then, for all $d\in D$, $\beta (d)\leq \xi (g_{d})\frac{%
v(d)}{v(g_{d})}$ and $\beta (d)\frac{v(g_{d})}{v(d)}\leq \xi (g_{d})$. We
observe that, for all $g\in G$, $g=g_{d}$ holds for all $d\in D_{g}$, so $%
\beta (d)\frac{v(g)}{v(d)}\leq \xi (g)$ holds for all $d\in D_{g}$ and then 
\begin{equation*}
\max (\beta (x)\frac{v(g)}{v(x)}:x\in D_{g})\leq \xi (g).
\end{equation*}
In this way we have shown that, for all $g\in G$, $\beta ^{G}(g)\leq \xi (g)$
and so $\beta ^{G}\preceq _{G}\xi $. By theorem \ref{teo2.1} we can conclude
that $\beta ^{G}\preceq _{v}\xi $. \ On the other side, let $\delta
:G\rightarrow R$ be a lower bound of $\{\xi \in A(G):\beta \preceq _{v}\xi
\} $, \ then $\delta \preceq _{v}\beta ^{G}$ because $\beta \preceq
_{v}\beta ^{G}$, as we have shown before the preceding lemma.
\end{proof}

In the preceding corollary, $\bigvee $ can indifferently be taken in $A(E)$
and $\bigwedge $ in $A(G)$.

\begin{corollary}
If $\alpha :E\rightarrow R$ and $\beta :D\rightarrow R$ then $\inf (\alpha
,\beta )=\inf (\alpha _{E\wedge D},\beta _{E\wedge D})$ and $\sup (\alpha
,\beta )=\sup (\alpha ^{E\vee D},\beta ^{E\vee D})$.
\end{corollary}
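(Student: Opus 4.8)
The plan is to read off both equalities directly from the two closure lemmas together with the sublattice structure supplied by Theorem~\ref{teo4}. Recall that in the proof that $(A(\mathcal{A}),\preceq _{v})$ is closed with respect to $\inf$, the infimum of $\alpha :E\rightarrow R$ and $\beta :D\rightarrow R$ was exhibited concretely as the act $\phi :E\wedge D\rightarrow R$ given by $\phi (z)=\min (\alpha _{E\wedge D}(z),\beta _{E\wedge D}(z))$; dually, in the proof of closure with respect to $\sup$ the supremum was exhibited as the act $\psi :E\vee D\rightarrow R$ given by $\psi (z)=\max (\alpha ^{E\vee D}(z),\beta ^{E\vee D}(z))$. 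So the whole content of the corollary is to recognize these two pointwise formulas as infima and suprema taken inside $A(\mathcal{A})$.

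First I would treat the $\inf$ equality. The key observation is that $\alpha _{E\wedge D}$ and $\beta _{E\wedge D}$ are both acts on the single partition $E\wedge D$, hence both lie in $A(E\wedge D)$. By Theorem~\ref{teo4}, $(A(E\wedge D),\preceq _{E\wedge D})$ is a sublattice of $(A(\mathcal{A}),\preceq _{v})$, so the greatest lower bound of these two acts computed in $A(\mathcal{A})$ coincides with their greatest lower bound in $A(E\wedge D)$, which is the pointwise minimum. Thus $\inf (\alpha _{E\wedge D},\beta _{E\wedge D})(z)=\min (\alpha _{E\wedge D}(z),\beta _{E\wedge D}(z))=\phi (z)$ for all $z\in E\wedge D$, and since $\phi =\inf (\alpha ,\beta )$ by the first lemma, the first equality follows.

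The $\sup$ case is entirely dual. The upgraded acts $\alpha ^{E\vee D}$ and $\beta ^{E\vee D}$ both belong to $A(E\vee D)$; Theorem~\ref{teo4} gives that $A(E\vee D)$ is a sublattice of $(A(\mathcal{A}),\preceq _{v})$, so their least upper bound in $A(\mathcal{A})$ is the pointwise maximum $\psi$, which is exactly $\sup (\alpha ,\beta )$ by the second lemma. Hence $\sup (\alpha ,\beta )=\sup (\alpha ^{E\vee D},\beta ^{E\vee D})$.

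There is no genuine obstacle here; the only point demanding attention is the bookkeeping of domains. One must notice that the downgraded acts $\alpha _{E\wedge D},\beta _{E\wedge D}$ (respectively the upgraded acts $\alpha ^{E\vee D},\beta ^{E\vee D}$) share a common domain, which is precisely what licenses the appeal to the sublattice result of Theorem~\ref{teo4} and lets the $\preceq _{v}$-infimum (resp. $\preceq _{v}$-supremum) collapse to a pointwise $\min$ (resp. $\max$) agreeing with the $\phi$ and $\psi$ already identified in the two lemmas.
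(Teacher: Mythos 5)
Your proposal is correct and follows essentially the same route as the paper: both arguments combine the explicit pointwise formulas $\phi(z)=\min(\alpha_{E\wedge D}(z),\beta_{E\wedge D}(z))$ and $\psi(z)=\max(\alpha^{E\vee D}(z),\beta^{E\vee D}(z))$ from the two closure lemmas with the sublattice statement of Theorem~\ref{teo4} to identify the $\preceq_v$-infimum (resp.\ supremum) of the downgraded (resp.\ upgraded) acts with their pointwise $\min$ (resp.\ $\max$). Nothing is missing; the domain bookkeeping you flag is exactly the point the paper's proof also turns on.
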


\begin{proof}
We denote with $\inf $ the greatest lower bound taken in $(A(\mathcal{A}%
),\preceq _{v})$ and with $\inf_{E\wedge D}$ the greatest lower bound taken
in $(A(E),\preceq _{E\wedge D})$. As $\alpha _{E\wedge D}$ and $\beta
_{E\wedge D}$ are acts in $A(E\wedge D)$, for all $z\in E\wedge D$ we have $%
\inf_{E\wedge D}(\alpha _{E\wedge D},\beta _{E\wedge D})(z)=\min (\alpha
_{E\wedge D}(z),\beta _{E\wedge D}(z))$ because $\inf_{E\wedge D}$ is
defined pointwise. But $\inf (\alpha ,\beta )(z)=\min (\alpha _{E\wedge
D}(z),\beta _{E\wedge D}(z))$ by definition, so $\inf (\alpha ,\beta )=\inf
(\alpha _{E\wedge D},\beta _{E\wedge D})$. Finally, we have $\inf_{E\wedge
D}(\alpha _{E\wedge D},\beta _{E\wedge D})=\inf (\alpha _{E\wedge D},\beta
_{E\wedge D})$ by theorem \ref{teo4}. The same kind of proof works for the
least upper bound.
\end{proof}

\section{The comparison of acts}

We can summarize the different ways of comparing acts introduced so far as
follows. Given $\alpha :E\rightarrow R$ and $\beta :D\rightarrow R$, we can
always compare $\alpha $ with $\beta $ in $\preceq _{\exp }$. If $\alpha $
and $\beta $ have the same domain, $E=D$, they can also be compared in $%
\preceq _{E}$. We know that $\alpha \preceq _{E}\beta $ implies $\alpha
\preceq _{\exp }\beta $. (We have $\alpha \preceq _{E}\beta $ implies $%
\alpha \preceq _{v}\beta $ by theorem \ref{teo2.1} and $\alpha \preceq
_{v}\beta $ implies $\alpha \preceq _{\exp }\beta $ by theorem \ref{teo2.2}%
.) If $\alpha $ and $\beta $ have different, but comparable, domains, i.e. $%
E\leq D$ or $D\leq E$, then we can compare $\alpha $ with $\beta $ in $%
\preceq _{v}$. We know that $\alpha \preceq _{v}\beta $ implies $\alpha
\preceq _{\exp }\beta $ by theorem \ref{teo2.2}. If $E$ and $D$ are
incomparable, we can resort to the best approximations from below to $\alpha 
$ and $\beta $ in $A(E\wedge D)$. So we define a preference relation $\alpha
\vartriangleleft \beta $ iff $\alpha _{E\wedge D}\leq _{E\wedge D}\beta
_{E\wedge D}$. An intuitive meaning may be attached to $\vartriangleleft $
if we observe how the set of conditions $E\wedge D$ arises from $E$ and $D$
by meet. As an example, we set $E=\{\uparrow \$,\downarrow \$\}$ and $%
D=\{\uparrow \pounds ,\downarrow \pounds \}$, where $\uparrow $ means
`rises' and $\downarrow $ means `sinks'. Then $E\wedge D=\{\uparrow
\$\&\uparrow \pounds ,\uparrow \$\&\downarrow \pounds ,\downarrow
\$\&\uparrow \pounds ,\downarrow \$\&\downarrow \pounds \}$ is a natural
common set of conditions for $\alpha _{E\wedge D}$ and $\beta _{E\wedge D}$
where every condition $e\in E$ splits in the different cases $\{e\wedge
d:d\in D\}$.

We can easily prove that $\vartriangleleft $ is reflexive transitive and
antisymmetric. Firstly, we observe that $\alpha _{E\wedge D}\leq _{E\wedge
D}\beta _{E\wedge D}$ iff for all $e\wedge d$ in $E\wedge D$, $\frac{\alpha
(e)}{v(e)}\leq \frac{\beta (d)}{v(d)}$. In fact, 
\begin{equation*}
\alpha _{E\wedge D}(e\wedge d)\leq \beta _{E\wedge D}(e\wedge d)\text{ iff }%
\alpha (e)\frac{v(e\wedge d)}{v(e)}\leq \beta (e)\frac{v(e\wedge d)}{v(d)}%
\text{ iff }\frac{\alpha (e)}{v(e)}\leq \frac{\beta (d)}{v(d)}.
\end{equation*}
Now we can easily see that $\vartriangleleft $ is a partial order. The
following theorem shows that $\vartriangleleft $ can be seen as a
generalization of $\leq _{E}$ and $\leq _{v}$.

\begin{lemma}
If $\alpha :E\rightarrow R$ then $\alpha _{E}=\alpha $ and $\alpha
^{E}=\alpha $.
\end{lemma}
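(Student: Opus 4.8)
The plan is to compute $\alpha_{E}$ and $\alpha^{E}$ directly from their definitions, both of which apply here because $\leq$ is reflexive on partitions (Theorem \ref{teopartlatt} of Appendix \ref{Appendix A}), so $E\leq E$ and both the downgraded act $\alpha_{E}$ and the upgraded act $\alpha^{E}$ are well-defined acts on $E$. The single structural fact that makes both computations immediate is that distinct blocks of a partition are incomparable: if $e,e'\in E$ satisfy $e\leq e'$, then $e=e\wedge e'$, and since $e\neq 0$ by clause $3$ of the definition while $e\wedge e'=0$ for $e\neq e'$ by clause $2$, we get $e=e'$. This is exactly the observation already used for reflexivity in Theorem \ref{teo2.3}.

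For the first equality I would take $D=E$ in the definition of the downgrade. For each $e\in E$, the unique $x\in E$ with $e\leq x$ is $e$ itself by incomparability, so $e_{e}=e$ and $\alpha_{E}(e)=\alpha(e_{e})\frac{v(e)}{v(e_{e})}=\alpha(e)\frac{v(e)}{v(e)}=\alpha(e)$, whence $\alpha_{E}=\alpha$. For the second equality I would take $D=G=E$ in the definition of the upgrade. For each $e\in E$, the index set $E_{e}=\{x\in E:x\leq e\}$ collapses to the singleton $\{e\}$, again by incomparability, so the defining maximum is taken over a single term and $\alpha^{E}(e)=\alpha(e)\frac{v(e)}{v(e)}=\alpha(e)$, whence $\alpha^{E}=\alpha$.

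There is no genuine obstacle in this lemma; the only point needing care is the reduction of the relevant index set, namely the unique dominating block $e_{e}$ in the downgrade and the singleton $E_{e}$ in the upgrade. Once this reduction is made, both formulas collapse because the factor $v(e)/v(e)$ equals $1$, a ratio whose meaningfulness is already presupposed, since it appears in the very definitions of $\beta_{E}$ and $\beta^{G}$.
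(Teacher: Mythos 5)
Your proof is correct and follows essentially the same route as the paper's: both equalities reduce to the facts $e_{e}=e$ and $E_{e}=\{e\}$, which the paper simply asserts and which you additionally justify by the incomparability of distinct blocks (clauses 2 and 3 of the definition of a partition). The extra justification is sound and harmless; nothing further is needed.
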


\begin{proof}
For all $e\in E$, $\alpha _{E}(e_{e})=\alpha (e)\frac{v(e)}{v(e_{e})}=\alpha
(e)$, because $e_{e}=e$. For all $e\in E$, $\alpha ^{B}(e)=\max \{\alpha (x)%
\frac{v(e)}{v(x)}:x\in E_{e}\}=\alpha (e)$, because $E_{e}=\{e\}$.
\end{proof}

\begin{theorem}
If $\alpha :E\rightarrow R$ and $\beta :D\rightarrow R$ then

\begin{enumerate}
\item  $E=D$ implies $\alpha \vartriangleleft \beta $ iff $\alpha \leq
_{E}\beta $,

\item  $\alpha \leq _{v}\beta $ implies $\alpha \vartriangleleft \beta $; $%
\alpha \vartriangleleft \beta $ and $E\leq D$ imply $\alpha \leq _{v}\beta $.
\end{enumerate}
\end{theorem}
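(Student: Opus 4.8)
The plan is to reduce everything to the ratio characterization of $\vartriangleleft$ just established, namely that $\alpha\vartriangleleft\beta$ holds iff $\frac{\alpha(e)}{v(e)}\leq\frac{\beta(d)}{v(d)}$ for every block $e\wedge d\in E\wedge D$. Combined with the preceding lemma ($\alpha_E=\alpha$), the whole argument turns on recognizing which meets $e\wedge d$ are nonzero, i.e. which pairs $(e,d)$ actually contribute a block to the common refinement $E\wedge D$; each of the three implications then follows by direct substitution.

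For part 1, I would note that when $E=D$ the common refinement collapses to $E$ itself, since $e\wedge e'=0$ for distinct $e,e'\in E$ while $e\wedge e=e$. The characterization then specializes to requiring $\frac{\alpha(e)}{v(e)}\leq\frac{\beta(e)}{v(e)}$ for all $e\in E$, which, as $v(e)>0$, is equivalent to $\alpha(e)\leq\beta(e)$ for all $e$, that is $\alpha\leq_E\beta$.

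For part 2 both directions rest on the same reduction: when $E\leq D$ the meet again collapses, $E\wedge D=E$, because for each $e\in E$ the unique $d\in D$ with $e\wedge d\neq0$ is $d_e$ (uniqueness by lemma \ref{lemma2} of Appendix \ref{Appendix A}), and $e\wedge d_e=e$. Hence the ratio characterization reduces to $\frac{\alpha(e)}{v(e)}\leq\frac{\beta(d_e)}{v(d_e)}$ for all $e$, i.e. $\alpha(e)\leq\beta(d_e)\frac{v(e)}{v(d_e)}$, which is exactly clause 2 of $\alpha\leq_v\beta$. For the first implication, assuming $\alpha\leq_v\beta$ I read off $E\leq D$ from clause 1 and clause 2 as the ratio inequalities, giving $\alpha\vartriangleleft\beta$. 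For the converse, assuming $\alpha\vartriangleleft\beta$ and $E\leq D$, clause 1 of $\leq_v$ is the standing hypothesis and clause 2 is obtained by specializing $\vartriangleleft$ to the blocks $e\wedge d_e$, yielding $\alpha\leq_v\beta$.

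The only genuine content, used in all three cases, is the combinatorial observation that refinement forces the blocks of $E\wedge D$ to be indexed by $E$ alone, with $e\wedge d_e=e$ and $e\wedge d=0$ for $d\neq d_e$; this is where the partition axioms and the uniqueness of $d_e$ enter. Once this is isolated the remaining steps are one-line substitutions into the ratio form of $\vartriangleleft$, and I anticipate no further obstacle.
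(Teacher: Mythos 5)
Your proof is correct and is essentially the paper's own argument: both turn on the observation that $E\wedge D$ collapses to $E$ when $E=D$ or $E\leq D$, after which $\vartriangleleft$ reduces to exactly clause 2 of $\preceq _{v}$ --- you get there via the ratio characterization, the paper via the lemma $\alpha _{E}=\alpha $ and the definition of $\beta _{E}$, which is the same unfolding. One pedantic note: the uniqueness of the $d\in D$ with $e\wedge d\neq 0$ follows from the pairwise disjointness of the blocks of $D$ together with $e\leq d_{e}$, not from Lemma \ref{lemma2} as cited, but the fact is immediate and the step stands.
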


\begin{proof}
1. If $E=D$ then $E\wedge D=E$ and so $\alpha \vartriangleleft \beta $ iff $%
\alpha _{E}\ \leq _{E}\beta _{E}$ iff $\alpha \leq _{E}\beta $, as $\alpha
_{E}=\alpha $ and $\beta _{E}=\beta _{D}=\beta $, by the lemma.

2. We must show that $\alpha _{E\wedge D}\leq _{E\wedge D}\beta _{E\wedge D}$%
. By our hypothesis $\alpha \leq _{v}\beta $, $E\leq D$ holds, so we can
reduce ourselves to prove that $\alpha _{E}\leq _{E}\beta _{E}$ i.e. $\alpha
\leq _{E}\beta _{E}$. So we have to prove that $\alpha (e)\leq \beta
_{E}(e)=\beta (d_{e})\frac{v(e)}{v(d_{e})}$, what follows from our
hypothesis. Now we assume $\alpha \vartriangleleft \beta $ and $E\leq D$,
then $\alpha _{E\wedge D}\leq _{E\wedge D}\beta _{E\wedge D}$ and $\alpha
\leq _{E}\beta _{E}$, so for all $e\in E$, $\alpha (a)\leq \beta (d_{e})%
\frac{v(e)}{v(d_{e})}$ and $\alpha \leq _{v}\beta $ follows.
\end{proof}

Dually, we can define a preference relation setting $\alpha
\blacktriangleleft \beta $ iff $\alpha ^{E\vee D}\leq _{E\vee D}\beta
^{E\vee D}$. Now $\alpha ^{E\vee D}$ and $\beta ^{E\vee D}$ are the best
approximation to $\alpha $ and $\beta $ from above in $A(E\vee D)$. The join
of the set of conditions $E$ and $D$ is trivial in the example above. In
general, if $|E|=|D|=2$, then $E=\{e,\lnot e\}$ and $D=\{d,\lnot d\}$, where 
$\lnot $ denotes the complement operation, so $E\vee D=1$, the top element
of the lattice $\Pi (\mathcal{A})$. We can give a non-trivial example of $%
E\vee D$ as follows. We consider the interval $[0,1)$ as the price range of
a good and we define five subintervals $a=[0,0.2)$, $b=[0.2,0.4)$, $%
c=[0.4,0.6)$, $d=[0.6,0.8)$, $e=[0.8,1)$. Let $\mathcal{A}=\mathcal{P}(A)$,
where $A=\{a,b,c,d,e\}$. We identify each $x\in A$ with the singleton $\{x\}$
and denote with $a|b|c|d|e$ the least partition in $\Pi (\mathcal{A})$.
There are 52 partitions in $\Pi (\mathcal{A})$, but we can fix our attention
on the four-elements lattice of the following figure, where juxtaposition
denotes set-union (i.e. $abc=a\cup b\cup c$).

\begin{center}
\setlength{\unitlength}{1mm}

\begin{picture}(45,45)
\put(19,5){\line(1,1){14}}
\put(5,19){\line(1,1){14}}

\put(5,19){\line(1,-1){14}}
\put(19,33){\line(1,-1){14}}

\put(-7,18){$ac|b|ed$}
\put(13,0){$a|b|c|ed$}
\put(34,18){$ab|c|ed$}
\put(13,35){$abc|ed$}


\end{picture}
\end{center}

\noindent We set $E=ac|b|ed$ and $D=ab|c|ed$. Let $\alpha :E\rightarrow R$
and $\beta :D\rightarrow R$ be two acts. As we suppose that $[0,1)$ be the
price range of a good, then $\alpha (b)$ represents the payoff of $\alpha $
when the price is in $[0.2,0.4)$. The same holds for $\beta $. Now $E\wedge
D $ is the set of conditions containing all non-empty meets of conditions in 
$E $ with conditions in $D$ and $E\vee D$ is the set of all (minimal)
common-joins from $E$ and $D$ (joins of conditions of $\alpha $ that arise
also as joins of conditions of $\beta $).

We can easily prove that $\blacktriangleleft $ is reflexive and
antisymmetric. Firstly, we observe that $\alpha ^{E\vee D}\leq _{E\vee
D}\beta ^{E\vee D}$ iff for all $w\in E\vee D$, $\alpha ^{E\vee D}(w)\leq
\beta ^{E\vee D}(w)$ and this happens iff $\max \{\frac{\alpha (x)}{v(x)}%
:x\in E_{w}\}\leq \max \{\frac{\beta (x)}{v(x)}:x\in D_{w}\}$. Transitivity
fails, as can be easily seen by a counterexemple.

The following theorem shows that $\blacktriangleleft $ can be seen as a
generalization of $\leq _{E}$ and $\leq _{v}$.

\begin{theorem}
If $\alpha :E\rightarrow R$ and $\beta :D\rightarrow R$ then

\begin{enumerate}
\item  $E=D$ implies $\alpha \blacktriangleleft \beta $ iff $\alpha \leq
_{E}\beta $,

\item  $\alpha \leq _{v}\beta $ implies $\alpha \vartriangleleft \beta $; $%
\alpha \blacktriangleleft \beta $ and $E\leq D$ imply $\alpha \leq _{v}\beta 
$.
\end{enumerate}
\end{theorem}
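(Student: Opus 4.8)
The plan is to run the argument in exact parallel with the preceding theorem on $\vartriangleleft$, dualising every ingredient: the downgrade $\beta_{E}$, the meet $E\wedge D$ and approximation from below are replaced by the upgrade $\beta^{G}$, the join $E\vee D$ and approximation from above. The one structural fact I would isolate first is that, since $\Pi(\mathcal{A})$ is a lattice (theorem \ref{teopartlatt} of Appendix \ref{Appendix A}), the refinement relation satisfies $E\leq D$ iff $E\vee D=D$, and of course $E=D$ gives $E\vee D=E$. Combined with the preceding lemma, which yields $\beta^{D}=\beta$ whenever $\beta:D\rightarrow R$, this collapses the defining comparison $\alpha^{E\vee D}\leq_{E\vee D}\beta^{E\vee D}$ to a pointwise comparison on the coarser partition. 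Part 1 is then immediate: assuming $E=D$ we have $E\vee D=E$, so $\alpha\blacktriangleleft\beta$ unfolds to $\alpha^{E}\leq_{E}\beta^{E}$, and since the lemma gives $\alpha^{E}=\alpha$ and $\beta^{E}=\beta$, this is literally $\alpha\leq_{E}\beta$.

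For the first half of part 2 (which I read as $\alpha\leq_{v}\beta$ implies $\alpha\blacktriangleleft\beta$, the symbol $\vartriangleleft$ in the statement being a misprint for $\blacktriangleleft$) I would use that clause 1 of $\alpha\leq_{v}\beta$ is $E\leq D$, hence $E\vee D=D$ and $\beta^{D}=\beta$. So $\alpha\blacktriangleleft\beta$ reduces to showing $\alpha^{D}\leq_{D}\beta$, i.e. for every $d\in D$ that $\alpha^{D}(d)=\max\{\alpha(x)\frac{v(d)}{v(x)}:x\in E_{d}\}\leq\beta(d)$. For any $x\in E_{d}$ we have $x\leq d$, so $d_{x}=d$, and clause 2 of $\alpha\leq_{v}\beta$ gives $\alpha(x)\leq\beta(d)\frac{v(x)}{v(d)}$, that is $\alpha(x)\frac{v(d)}{v(x)}\leq\beta(d)$. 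As this holds for every $x\in E_{d}$, the maximum over $E_{d}$ is also bounded by $\beta(d)$, which is what is needed.

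For the second half I would assume $\alpha\blacktriangleleft\beta$ together with $E\leq D$; again $E\vee D=D$ and $\beta^{D}=\beta$, so the hypothesis reads $\alpha^{D}(d)\leq\beta(d)$ for all $d\in D$. Clause 1 of $\alpha\leq_{v}\beta$ is the assumed $E\leq D$, so only clause 2 is at issue. Fixing $e\in E$ and writing $d=d_{e}$, I have $e\in E_{d}$, whence $\alpha(e)\frac{v(d)}{v(e)}\leq\alpha^{D}(d)\leq\beta(d)$; this rearranges to $\alpha(e)\leq\beta(d_{e})\frac{v(e)}{v(d_{e})}$, giving $\alpha\leq_{v}\beta$. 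I do not expect a genuine obstacle here: the only two points needing care are the lattice identity $E\leq D\Leftrightarrow E\vee D=D$ (so that the upgrade to $E\vee D$ is exactly the upgrade to $D$, with $\beta^{D}=\beta$), and the elementary equivalence $\max_{x}f(x)\leq c$ iff $f(x)\leq c$ for all $x$, which is precisely what lets me pass between the pointwise clauses of $\leq_{v}$ and the single $\max$ defining $\alpha^{D}$.
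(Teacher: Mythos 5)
Your proposal is correct and follows essentially the same route as the paper's own proof: part 1 via $E\vee D=E$ and the lemma $\alpha ^{E}=\alpha $, and part 2 by collapsing $E\vee D$ to $D$ (so that the comparison becomes $\alpha ^{D}\leq _{D}\beta $) and using $d_{x}=d$ for $x\in E_{d}$ together with the max-versus-pointwise equivalence. Your only additions are to make explicit the identity $E\leq D$ iff $E\vee D=D$ and the reading of $\vartriangleleft $ as a misprint for $\blacktriangleleft $, both of which the paper uses tacitly.
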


\begin{proof}
1. If $E=D$ then $E\vee D=E$ so $\alpha \blacktriangleleft \beta $ iff $%
\alpha ^{E}\leq _{E}\beta ^{E}$ iff $\alpha \leq _{E}\beta $, the the lemma
above.

2. We assume $\alpha \leq _{v}\beta $ so $E\leq D$ and, for all $e\in E$, $%
\alpha (a)\leq \beta (d_{e})\frac{v(e)}{v(d_{e})}$ and so $\alpha (a)\frac{%
v(d_{e})}{v(e)}\leq \beta (d_{e})$. We must show that $\alpha ^{E\vee D}\leq
_{E\vee D}\beta ^{E\vee D}$, i.e. $\alpha ^{D}\leq _{D}\beta $. So we can
reduce ourselves to prove that, for all $d\in D$, $\alpha ^{D}(d)\leq \beta
(d)$, i.e. $\max \{\alpha (x)\frac{v(d)}{v(x)}:x\in E_{d}\}\leq \beta (d)$.
If $x\in E_{d}$ then $d_{x}=d$, so by our hypothesis we have $\alpha (x)%
\frac{v(d)}{v(x)}\leq \beta (d)$ for all $d\in D$ and the result follows. We
assume $\alpha \blacktriangleleft \beta $ and $E\leq D$, then $\alpha
^{E\vee D}\leq _{E\vee D}\beta ^{E\vee D}$, i.e. $\alpha ^{D}\leq _{D}\beta $
and so $\max \{\alpha (x)\frac{v(d)}{v(x)}:x\in E_{d}\}\leq \beta (d)$ for
all $d\in D$. If $x\in E_{d}$ then $\alpha (x)\frac{v(d)}{v(x)}\leq \beta
(d) $ and $\alpha (x)\leq \beta (d)\frac{v(x)}{v(d)}$. But for all $e\in E$,
we have $e\in E_{d_{e}}$, so $\alpha (e)\leq \beta (d_{e})\frac{v(e)}{%
v(d_{e})}$ and $\alpha \leq _{v}\beta $ follows.

\appendix
\end{proof}

\section{Partitions in finite distributive lattices\label{Appendix A}}

The set theoretic notion of partition, introduced in the first paragraph,
can be generalized as follows. Let $\mathcal{A}$ be a finite distributive
lattice, we say that $E\subseteq A$ is a \textit{partition} of $\mathcal{A}$
if

\begin{enumerate}
\item  $\bigvee E=1$,

\item  $e_{2}\wedge e_{2}=0$, for all $e_{2},e_{2}\in E$ with $e_{2}\neq
e_{2}$,

\item  $e\neq 0$, for all $e\in E$.
\end{enumerate}

We denote with $\Pi (A)$ the set of all partitions of $\mathcal{A}$. Of
course, every set theoretic partition $\{X_{i}:i\in I\}$ is also an
algebraic partition of $\mathcal{P}(X)$, the Boolean algebra of all subsets
of $X$. In the following we speak generically of partitions, leaving to the
context to decide whether algebraic or set theoretical partitions are
involved. In general, we speak of a partition \textit{on} (a set) $X$ when a
set theoretical partition is intended, and speak of a partition \textit{of}
(a lattice) $\mathcal{A}$ when an algebraic partition is intended.

\begin{lemma}
\label{lemma1}For all partition $E$ of $\mathcal{A}$ and all $e\in E$, $%
E-\{e\}$ is not a partition of $\mathcal{A}$
\end{lemma}

\begin{proof}
We set $E^{\prime }=$.$E-\{e\}$ and suppose that $E^{\prime }$ is a
partition, then $\bigvee E^{\prime }=1$ and so 
\begin{equation*}
e=e\wedge \bigvee E^{\prime }=\bigvee \{e\wedge e^{\prime }:e^{\prime }\in
E^{\prime }\}=0,
\end{equation*}
because $e$, $e^{\prime }\in E$ and $e\neq e^{\prime }$. But $e\neq 0$,
because $E$ is a partition.
\end{proof}

We define a relation on partitions setting $E\leq D$ iff for all $e\in E$
there is a $d\in D$ such that $e\leq d$. In this case, we say that $E$ is a 
\textit{refinement} of\textit{\ }(or is \textit{finer} than) $D$. The
following lemma shows that there is only one $d$ of this kind, so we can
speak of \textit{the} $d\in D$ such that $e\leq d$ and denote it with $d_{e}$%
.

\begin{lemma}
\label{lemma2}If $E\leq D$ then, for all $e\in E,$ there is only one $d\in D$
such that $e\leq d$.
\end{lemma}

\begin{proof}
We suppose that, for some $e\in E$, there are $d$ and $d^{\prime }$ in $D$
such that $d\neq d^{\prime }$ and $e\leq d$, $d^{\prime }$. Then $e\leq
d\wedge d^{\prime }=0$, but this is absurd because $E$ is a partition.
\end{proof}

The following lemma shows that $d_{x}$, as a function $E\rightarrow D$, is
surjective.

\begin{lemma}
\label{lemma3}If $E\leq D$ then, for all $d\in D$, there is $e\in E$ such
that $e\leq d$.
\end{lemma}

\begin{proof}
As $E\leq D$, for all $e\in E$ there is a $d\in D$ such that $e\leq d$. We
suppose that there is $\overline{d}\in D$ such that, for all $e\in E$, $%
e\nleq \overline{d}$, then we have $1=\bigvee E\leq \bigvee ($ $D-\{%
\overline{d}\})$. Then for all $x$, $y\in D-\{\overline{d}\}$, we have $%
x\wedge y=0$. Finally, for all $x\in D-\{\overline{d}\}$, we have $x\neq 0$.
So $D-\{\overline{d}\}$ is a partition of $\mathcal{A}$, but this is absurd
by lemma \ref{lemma1}.
\end{proof}

If $E\leq D$ then, for all $d\in D$, we define $E_{d}=\{x\in E:x\leq d\}$:
we shall prove that $d=\bigvee E_{d}$.

\begin{lemma}
\label{lemma4}If $E\leq D$ then:

\begin{enumerate}
\item  $\{E_{x}:x\in D\}$ is a (set theoretic) partition on $E$,

\item  $\{\bigvee E_{x}:x\in D\}$ is an (algebraic) partition of $\mathcal{A}
$.
\end{enumerate}
\end{lemma}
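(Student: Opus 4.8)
The plan is to verify the three partition axioms for each of the two families, drawing on Lemmas \ref{lemma2} and \ref{lemma3} for part 1 and then bootstrapping part 2 from part 1.

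For part 1, I would first note that each $E_x\subseteq E$ by definition, so it suffices to check covering, disjointness, and nonemptiness. Covering is immediate: given $e\in E$, since $E\leq D$ there is $d_e\in D$ with $e\leq d_e$, whence $e\in E_{d_e}$. Disjointness is exactly the content of the uniqueness Lemma \ref{lemma2}: if $e\in E_{x}\cap E_{x'}$ then $e\leq x$ and $e\leq x'$, so $x=x'=d_e$; equivalently, for $x\neq x'$ we would get $e\leq x\wedge x'=0$, contradicting $e\neq0$. Finally, nonemptiness of each $E_x$ is precisely Lemma \ref{lemma3}, which guarantees some $e\in E$ with $e\leq x$. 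This settles part 1.

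For part 2, set $F=\{\bigvee E_x:x\in D\}$ and check the three algebraic-partition axioms. The join condition follows from part 1: since $\{E_x:x\in D\}$ covers $E$, associativity of $\bigvee$ gives
\[
\bigvee F=\bigvee_{x\in D}\bigvee E_x=\bigvee E=1 .
\]
For pairwise disjointness, every element of $E_x$ is $\leq x$, so $\bigvee E_x\leq x$; hence for two distinct members $\bigvee E_x,\bigvee E_{x'}$ of $F$ (which forces $x\neq x'$) we get $(\bigvee E_x)\wedge(\bigvee E_{x'})\leq x\wedge x'=0$, using that $D$ is a partition. Nonemptiness $\bigvee E_x\neq0$ holds because $E_x\neq\emptyset$ by part 1 and every $e\in E_x$ satisfies $0\neq e\leq\bigvee E_x$. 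These three facts make $F$ an algebraic partition.

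I expect the only real point of care to be in part 2. A cleaner and more informative conclusion is to prove the stronger identity $\bigvee E_x=x$ for each $x\in D$, which is the relation promised just before the lemma and recorded as Theorem \ref{teo1}. The inequality $\bigvee E_x\leq x$ is trivial; for the reverse I would use finite distributivity together with $\bigvee E=1$, writing
\[
x=x\wedge\bigvee E=\bigvee_{e\in E}(x\wedge e),
\]
and then observing that $x\wedge e=e$ when $e\in E_x$, while $x\wedge e\leq x\wedge d_e=0$ when $e\notin E_x$ (since then $d_e\neq x$ and $D$ is a partition). This yields $x=\bigvee E_x$, so $F=D$ and part 2 is immediate. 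The main obstacle, such as it is, is simply the choice between these two routes: the direct axiom-check above avoids distributivity but leaves the cosmetic subtlety that distinct members of $F$ automatically come from distinct $x$, whereas the identity $\bigvee E_x=x$ disposes of this at once by identifying $F$ with $D$.
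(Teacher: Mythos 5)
Your proposal is correct, and its main line follows the paper's proof almost exactly: part 1 is step-for-step the same (covering via $e\in E_{d_e}$, disjointness via $e\leq x\wedge x'=0$, nonemptiness via Lemma \ref{lemma3}), and part 2 has the same skeleton of deducing the three algebraic axioms from part 1. The one genuine divergence in your axiom check is the disjointness step of part 2: you observe $\bigvee E_x\leq x$ and conclude $(\bigvee E_x)\wedge(\bigvee E_{x'})\leq x\wedge x'=0$ by monotonicity, whereas the paper expands $(\bigvee E_x)\wedge(\bigvee E_y)=\bigvee\{a\wedge b:a\in E_x,\,b\in E_y\}$ by distributivity and then uses $E_x\cap E_y=\emptyset$ to kill each term; your version is slightly cleaner and makes the whole axiom check independent of distributivity, which the paper's is not. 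Your suggested alternative route is also sound but worth flagging: the identity $\bigvee E_x=x$ is precisely Theorem \ref{teo1}, which the paper proves \emph{after} and \emph{from} this lemma (together with Lemma \ref{lemma5}). Your direct argument for it (via $x=x\wedge\bigvee E$, distributivity, and $x\wedge e\leq x\wedge d_e=0$ for $e\notin E_x$) is self-contained, so there is no circularity; adopting it would simply invert the paper's architecture, absorbing Theorem \ref{teo1} and Lemma \ref{lemma5} into the present lemma and making part 2 trivial, at the cost of invoking distributivity where the axiom check can avoid it.
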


\begin{proof}
1. Firstly, we prove that $\bigcup \{E_{x}:x\in D\}=E$. On one side we have $%
\bigcup \{E_{x}:x\in D\}\subseteq E$, because $E_{x}\subseteq E$ for all $%
x\in D$. On the other side, $E\subseteq $ $\bigcup \{E_{x}:x\in D\}$
because, for all $e\in E$, there is a $x\in D$ such that $e\leq x$ and $e\in
E_{x}$. Then we prove that, for all $x$, $x^{\prime }\in D$, $x\neq
x^{\prime }$, we have $E_{x}\cap E_{x^{\prime }}=\emptyset $, because $y\in
E_{x}\cap E_{x^{\prime }}$ implies $y\leq x$ and $y\leq x^{\prime }$ so that 
$y\leq x\wedge x^{\prime }=0$, that is absurd. Finally, we have $E_{x}\neq
\emptyset $, for all $x\in D$, by lemma \ref{lemma3}

2. Firstly, we have 
\begin{equation*}
\bigvee \{\bigvee E_{x}:x\in D\}=\bigvee \bigcup \{E_{x}:x\in D\}=\bigvee E=1
\end{equation*}
because, by point 1), we have $\bigcup \{E_{x}:x\in D\}=E$. Then we have,
for all $x$, $y\in D$, $x\neq y$, 
\begin{equation*}
(\bigvee E_{x})\wedge (\bigvee E_{y})=\bigvee \{a\wedge b:a\in E_{x},b\in
E_{y}\}=0
\end{equation*}
because $E_{x}\cap E_{y}=\emptyset $, by point1). Finally, for all $x\in D$,
we have $\bigvee E_{x}\neq 0$, because $E_{x}\neq \emptyset $ by point 1).
\end{proof}

\begin{lemma}
\label{lemma5}If $E\leq D$ then for all $x$, $y\in D$, if $x\neq y$ then $%
x\wedge \bigvee E_{y}=0$.
\end{lemma}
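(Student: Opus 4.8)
The plan is to reduce the whole claim to two elementary facts: that $\bigvee E_{y}$ sits below $y$, and that distinct blocks of the partition $D$ are disjoint. First I would note that, by the very definition $E_{y}=\{e\in E:e\leq y\}$, every element of $E_{y}$ lies below $y$. Hence $y$ is an upper bound of $E_{y}$, and so $\bigvee E_{y}\leq y$. This step uses only the definition of the join as the least upper bound and requires neither distributivity nor theorem \ref{teo1}; I am careful to avoid invoking $\bigvee E_{d}=d$ here, since in the paper's logical order that identity is established after the present lemma and relying on it would be circular.

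Next I would use the hypothesis that $x$ and $y$ are distinct elements of the partition $D$. By clause 2 of the definition of a partition we then have $x\wedge y=0$. Combining this with the previous step and using the monotonicity of $\wedge$, we get $x\wedge\bigvee E_{y}\leq x\wedge y=0$. Since $0$ is the least element of $\mathcal{A}$ we also have $0\leq x\wedge\bigvee E_{y}$, so $x\wedge\bigvee E_{y}=0$, which is exactly the assertion.

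An equivalent route, if one prefers to exhibit the cancellation term by term, is to invoke the distributivity of $\mathcal{A}$ and write $x\wedge\bigvee E_{y}=\bigvee\{x\wedge e:e\in E_{y}\}$; then for each $e\in E_{y}$ one has $x\wedge e\leq x\wedge y=0$, so the whole join collapses to $0$. Either version is routine, and I expect no real obstacle: the only point demanding attention is precisely the one flagged above, namely keeping the argument independent of $\bigvee E_{d}=d$, which is why I base the bound on the immediate inequality $\bigvee E_{y}\leq y$ rather than on an equality.
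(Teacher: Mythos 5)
Your proposal is correct, and your primary argument is genuinely different from (and slightly more elementary than) the paper's. The paper proves the lemma exactly by your \emph{alternative} route: it invokes distributivity to write $x\wedge \bigvee E_{y}=\bigvee \{x\wedge z:z\in E_{y}\}$ and then kills each term via $z\leq y$, hence $x\wedge z\leq x\wedge y=0$. Your main route instead never distributes: you observe that $y$ is an upper bound of $E_{y}$ by definition, so $\bigvee E_{y}\leq y$, and then monotonicity of $\wedge$ gives $x\wedge \bigvee E_{y}\leq x\wedge y=0$. This buys something real, if modest: your argument is valid in an arbitrary lattice with $0$, whereas the paper's computation uses the distributivity hypothesis (harmlessly, since $\mathcal{A}$ is assumed distributive, but unnecessarily). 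Your caution about circularity is also well placed and consistent with the paper's logical order: lemma \ref{lemma5} is used in the proof of theorem \ref{teo1}, so the identity $\bigvee E_{d}=d$ must not be assumed here, and indeed neither your argument nor the paper's does so --- both rely only on the one-sided bound $\bigvee E_{y}\leq y$ (implicitly, in the paper's case, through $z\leq y$ for each $z\in E_{y}$).
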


\begin{proof}
We have $x\wedge \bigvee E_{y}=\bigvee \{x\wedge z:z\in E_{y}\}=0$, because $%
z\leq y$ implies $x\wedge z\leq x\wedge y=0$.
\end{proof}

\begin{theorem}
\label{teo1}If $E\leq D$ then, for all $d\in D$, $d=\bigvee E_{d}$.
\end{theorem}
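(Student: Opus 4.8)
The plan is to establish the two inequalities $\bigvee E_{d}\leq d$ and $d\leq \bigvee E_{d}$ separately, the first being immediate from the definition of $E_{d}$ and the second resting on the distributive law together with lemmas \ref{lemma4} and \ref{lemma5}.

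First I would observe that $\bigvee E_{d}\leq d$. Indeed, every $x\in E_{d}$ satisfies $x\leq d$ by the very definition $E_{d}=\{x\in E:x\leq d\}$, so $d$ is an upper bound of $E_{d}$ and therefore dominates its join. This direction requires nothing beyond the definition.

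For the reverse inequality I would exploit the fact, furnished by point 2 of lemma \ref{lemma4}, that $\{\bigvee E_{x}:x\in D\}$ is an algebraic partition of $\mathcal{A}$, whence $\bigvee \{\bigvee E_{x}:x\in D\}=1$. Fixing $d\in D$ and writing $d=d\wedge 1$, the finite distributive law gives
\[
d=d\wedge \bigvee \{\bigvee E_{x}:x\in D\}=\bigvee \{d\wedge \bigvee E_{x}:x\in D\}.
\]
By lemma \ref{lemma5}, every summand with $x\neq d$ equals $0$, so only the term $x=d$ survives and $d=d\wedge \bigvee E_{d}$. Combining this with the first inequality $\bigvee E_{d}\leq d$, which yields $d\wedge \bigvee E_{d}=\bigvee E_{d}$, I conclude $d=\bigvee E_{d}$.

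The argument is short, and there is no genuine obstacle once the partition property of $\{\bigvee E_{x}:x\in D\}$ is available: the only real decision is to expand $d\wedge 1$ through distributivity so that lemma \ref{lemma5} can annihilate all the off-diagonal meets, leaving a single term to be identified with $d$.
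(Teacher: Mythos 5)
Your proof is correct and follows essentially the same route as the paper's: both expand $d = d\wedge\bigvee\{\bigvee E_{x}:x\in D\}$ via distributivity, use Lemma \ref{lemma5} to annihilate the terms with $x\neq d$, and identify the surviving term with $\bigvee E_{d}$ using $\bigvee E_{d}\leq d$. Your explicit separation into two inequalities is only a cosmetic reorganization of the paper's argument.
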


\begin{proof}
We observe that 
\begin{equation*}
d\leq 1=\bigvee \{\bigvee E_{x}:x\in D\},
\end{equation*}
by point 2) of lemma \ref{lemma4}. So 
\begin{eqnarray*}
d &=&d\wedge \bigvee \{\bigvee E_{x}:x\in D\} \\
&=&\bigvee \{d\wedge \bigvee E_{x}:x\in D\} \\
&=&\bigvee E_{d}.
\end{eqnarray*}
The last line follows because $d\wedge \bigvee E_{x}=0$ when $d\neq x$, by
lemma \ref{lemma5}, and $d\wedge \bigvee E_{x}=\bigvee E_{d}$ when $x=d$, as 
$\bigvee E_{d}\leq d$ ($d$ is an upper bound for $E_{d}).$
\end{proof}

\begin{lemma}
\label{lemma6}For all $E$, $D\in \Pi (\mathcal{A})$, for all $e\in E$ there
is a $d\in D$ such that $e\wedge d\neq 0$.
\end{lemma}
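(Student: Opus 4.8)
The plan is to argue by contradiction, exploiting the two facts that $\bigvee D=1$ (since $D$ is a partition) and that $\mathcal{A}$ is distributive. Fix $e\in E$ and suppose, toward a contradiction, that $e\wedge d=0$ for every $d\in D$. Because $E$ is a partition, clause 3 gives $e\neq 0$, so deriving $e=0$ will finish the proof.

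First I would rewrite $e$ using the top element. Since $\bigvee D=1$, we have $e=e\wedge 1=e\wedge\bigvee D$. Next, because $\mathcal{A}$ is a finite distributive lattice, meet distributes over this finite join, so
\begin{equation*}
e\wedge\bigvee D=\bigvee\{e\wedge d:d\in D\}.
\end{equation*}
By the contradiction hypothesis each term $e\wedge d$ is $0$, hence the join collapses to $0$ and we obtain $e=0$, contradicting $e\neq 0$. The lemma follows at once.

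I do not expect any genuine obstacle here. The only step deserving a word of care is the appeal to distributivity over the finite join $\bigvee D$: this is exactly where the hypothesis that $\mathcal{A}$ is a finite distributive lattice is used, the finite distributive identity following from the binary one by a routine induction on $|D|$. All remaining manipulations are immediate consequences of the defining clauses of a partition, so the argument is essentially a one-line distributivity computation dressed as a proof by contradiction.
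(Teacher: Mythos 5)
Your proof is correct and follows essentially the same route as the paper's: both rest on the computation $e = e\wedge\bigvee D = \bigvee\{e\wedge d : d\in D\}$ via distributivity, together with $e\neq 0$ from the partition axioms. Recasting it as a proof by contradiction rather than the paper's direct reading of the same identity is a purely cosmetic difference.
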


\begin{proof}
We have $e\wedge \bigvee D=e\neq 0$, then $\bigvee \{e\wedge d:d\in D\}\neq
0 $ so there is a $d\in D$ such that $e\wedge d\neq 0$.
\end{proof}

\begin{theorem}
\label{teopartlatt}For all distributive finite lattice $\mathcal{A}$, $\Pi (%
\mathcal{A})$ is a bounded lattice with respect to $\leq $. If $\mathcal{A}$
is a Boolean algebra, then $At(A)$ is the bottom element of $\Pi (\mathcal{A}%
)$
\end{theorem}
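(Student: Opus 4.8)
The plan is to treat $\Pi(\mathcal{A})$ as a finite poset under $\leq$, establish that it is a meet-semilattice with a greatest element, and then invoke the elementary fact that a finite meet-semilattice with a top is automatically a (bounded) lattice. First I would check the order axioms. Reflexivity is immediate, and transitivity follows by composing witnesses: if $E \leq D \leq G$ then for $e \in E$ one has $e \leq d_e \leq g_{d_e}$, so $g_{d_e}$ witnesses $E \leq G$. Antisymmetry is the only axiom that genuinely uses the partition structure: from $E \leq D$ and $D \leq E$ one gets $e \leq d_e \leq e_{d_e}$ for each $e \in E$, and since $e$ and $e_{d_e}$ are comparable blocks of the single partition $E$ their meet equals $e \neq 0$, so pairwise disjointness forces $e = e_{d_e}$; squeezing then gives $e = d_e \in D$, whence $E \subseteq D$ and, symmetrically, $D \subseteq E$.

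Next I would exhibit the top element as $\{1\}$, which is a partition and satisfies $E \leq \{1\}$ for every $E$ since $e \leq 1$ always. For binary meets I would set $E \wedge D = \{e \wedge d : e \in E,\ d \in D,\ e \wedge d \neq 0\}$. Distributivity gives $\bigvee(E \wedge D) = \bigvee_{e} \bigl(e \wedge \bigvee_{d} d\bigr) = \bigvee_{e} e = 1$, pairwise disjointness holds because two distinct nonzero blocks $e \wedge d$ and $e' \wedge d'$ differ in a coordinate whose blocks meet to $0$, and no block is $0$ by construction; so $E \wedge D$ is a partition. It is clearly a lower bound of $E$ and of $D$, and it is the greatest one: any common refinement $G$ sends each $g \in G$ below some $e \in E$ and some $d \in D$, hence below the nonzero element $e \wedge d \in E \wedge D$.

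With all binary meets and a top available, the lattice (indeed bounded-lattice) property is now formal. In the finite poset $\Pi(\mathcal{A})$ the join $E \vee D$ may be taken to be the meet of the set of common upper bounds of $E$ and $D$; this set is finite and nonempty (it contains $\{1\}$), so the meet exists, and a one-line check shows it is the least upper bound. Finiteness then yields a bottom, namely $\bigwedge \Pi(\mathcal{A})$. I expect this abstract step to carry the real weight of the word \emph{lattice}, letting me avoid the combinatorially delicate explicit description of the least common coarsening $E \vee D$.

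Finally, for the Boolean case I would verify that $At(A)$ is the least element. It is a partition because in a finite Boolean algebra the atoms are nonzero, join to $1$, and are pairwise disjoint (for distinct atoms $p, q$ one has $p \wedge q < p$, so $p \wedge q = 0$ by minimality of $p$). That $At(A) \leq E$ for every partition $E$ is the key point: for an atom $p$, distributivity gives $p = p \wedge \bigvee E = \bigvee\{p \wedge e : e \in E\}$, so some $p \wedge e \neq 0$, and minimality of $p$ forces $p \wedge e = p$, i.e. $p \leq e$. Hence $At(A)$ sits below every partition and is the bottom of $\Pi(\mathcal{A})$. The recurring obstacle throughout is exactly this minimality/partition reasoning — that an atom, or a block dominated by another block of the same partition, is pinned down uniquely — which is what makes antisymmetry and the least-element claim go through.
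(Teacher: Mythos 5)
Your proposal is correct and follows essentially the same route as the paper: the same verification of the order axioms (including the identical sandwiching argument for antisymmetry), the top $\{1\}$, the same explicit construction of $E\wedge D$, joins obtained abstractly from meets plus a top (the paper cites Theorem 2.31 of Davey--Priestley where you spell out the meet-of-upper-bounds argument), the bottom as $\bigwedge \Pi(\mathcal{A})$ by finiteness, and the atoms as bottom in the Boolean case. The only difference is cosmetic: where the paper cites standard lemmas (e.g.\ that an atom below $\bigvee E$ lies below some $e\in E$), you prove them inline via distributivity and minimality, which is equally valid.
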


\begin{proof}
Firstly we prove that $\Pi (\mathcal{A})$ is partially ordered by $\leq $.
Reflexivity and transitivity of $\leq $ are immediate. As for antisimmetry,
we suppose $E\leq D$ and $D\leq E$ and prove that $E=D$. If $e\in E$ then
there is $d\in D$ such that $e\leq d$ and $e^{\prime }\in E$ such that $%
d\leq e^{\prime }$: so $e\leq e^{\prime }$. As $e$, $e^{\prime }\in E$, if $%
e\neq e^{\prime }$ then $e\wedge e^{\prime }=0$, but $e\wedge e^{\prime }=e$
and $e\neq 0$, so we conclude that $e=e^{\prime }$. As $d$ is sandwiched
between $e$ and $e^{\prime }$, we have $d=e$, so $e\in D$. In the same way
we prove that $D\subseteq E$, so $E=D$.

Secondly we prove that $\Pi (\mathcal{A})$ is a bounded lattice with respect
to $\leq $. $\Pi (\mathcal{A})$\ has a greatest element $\{1\}$, where $1$
is the top element of $\mathcal{A}$. $\Pi (\mathcal{A})$ contains the
greatest lower bound $E\wedge D$ for all $E$, $D\in \Pi (\mathcal{A})$. For
every $E$, $D\in \Pi (\mathcal{A})$, we set 
\begin{equation*}
H=\{e\wedge d:e\in E,d\in D,e\wedge d\neq 0\}
\end{equation*}
We observe that $H$ is not empty, by the above lemma, and we prove that $%
H=E\wedge D$. In the first place we prove that $E\wedge D$ is a partition of 
$\mathcal{A}$. In fact, we have 
\begin{eqnarray*}
\bigvee \{e\wedge d:e\in E,d\in D\} &=&\bigvee \{\bigvee \{e\wedge d:d\in
D\}:e\in E\} \\
&=&\bigvee \{e\wedge \bigvee \{d:d\in D\}:e\in E\} \\
&=&\bigvee \{e:e\in E\}\wedge \bigvee \{d:d\in D\} \\
&=&1,
\end{eqnarray*}
and we have $(e\wedge d)\wedge (e^{\prime }\wedge d^{\prime })=0$ whenever $%
e $, $e^{\prime }\in E^{\prime }$and $d$, $d^{\prime }\in D$. Now we can
easily see that $H$ is $E\wedge D$. On one side, $H\leq E$, $D$ because $%
e\wedge d\leq e$ and $e\wedge d\leq d$, for all $e\wedge d\in H$. On the
other side, for all $Z\in \Pi (\mathcal{A})$ such that $Z\leq E$, $D$, we
have $Z\leq H$, because for all $z\in Z$ there are $e\in E$ and $d\in D$
such that $z\leq e$ and $z\leq d$ and then $z\leq e\wedge d$. As $\mathcal{A}
$ is finite, the bottom element of $\Pi (\mathcal{A})$ is $\bigwedge \Pi (%
\mathcal{A})$. The existence $E\vee D$ follows by theorem 2.31 of \cite
{davey2002}.)

Now we suppose that $\mathcal{A}$ is a finite Boolean algebra. Firstly, we
prove that $At(\mathcal{A})$, the set of all atoms in $\mathcal{A}$, is a
partition of $\mathcal{A}$. By definition of atom, we have $a\wedge
a^{\prime }=0$ for all $a$, $a^{\prime }\in At(\mathcal{A})$. Then we
remember that, in a finite Boolean algebra $\mathcal{A}$, \ we have $%
a=\bigvee \{x\in At(\mathcal{A}):x\leq a\}$ for all $a\in A$, (see lemma 5.4
of \cite{davey2002}) so $1=\bigvee At(\mathcal{A})$. Now we can prove that,
for all $E\in \Pi (\mathcal{A})$, $At(\mathcal{A})\leq E$: in fact, for all $%
a\in At(\mathcal{A})$ we have $a\leq \bigvee E=1$ an so there is an $e\in E$
such that $a\leq e$, by lemma 5.11 (iii) of \cite{davey2002}.
\end{proof}

In the above theorem $E\vee D$ is described as $\bigwedge \{Z\in \Pi (%
\mathcal{A}):E\leq Z$ and $D\leq Z\}$. As a result of this definition from
above, we have no idea of the inner constitution of $E\vee D$. The following
two theorems are devoted to this scope. For all $X\subseteq A$, we define $%
[X]$ as the least subalgebra of $\mathcal{A}$ including $X$, i.e. the
intersection of all $\mathcal{B}\subseteq \mathcal{A}$ such that $X\subseteq
B$. We denote with $[X]_{\vee }$ the least subset of $A$ that is closed with
respect to finite (even empty) joins.

\begin{theorem}
\label{teo2}If $\mathcal{A}$ is a finite distributive lattice and $E$
partition of $\mathcal{A}$, then

\begin{enumerate}
\item  $[E]_{\vee }=[E]$,

\item  $[E]_{\vee }$ is closed with respect to complement, i.e. $[E]_{\vee }$
is a Boolean algebra.
\end{enumerate}
\end{theorem}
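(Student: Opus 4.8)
The plan is to give an explicit description of $[E]_{\vee}$ and then read off both claims from it. Since $\mathcal{A}$ is finite, I would first argue that every element of $[E]_{\vee}$ has the form $\bigvee S$ for some $S\subseteq E$: the empty join gives $0$ and the full join gives $\bigvee E=1$, the collection $\{\bigvee S:S\subseteq E\}$ plainly contains $E$ and is closed under (finite) joins, and it sits inside any join-closed set containing $E$. Hence $[E]_{\vee}=\{\bigvee S:S\subseteq E\}$, which is the workhorse for everything that follows.

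The key computational step is to show that this set is also closed under meet, and here the partition hypothesis does the work. For $S,T\subseteq E$, finite distributivity of $\mathcal{A}$ gives
\begin{equation*}
\Bigl(\bigvee S\Bigr)\wedge\Bigl(\bigvee T\Bigr)=\bigvee\{s\wedge t:s\in S,\ t\in T\}.
\end{equation*}
Because distinct elements of a partition meet to $0$ while $s\wedge s=s$, the right-hand side collapses to $\bigvee(S\cap T)$, again a member of $[E]_{\vee}$. Thus $[E]_{\vee}$ is a sublattice of $\mathcal{A}$ containing $E$, with bottom $0$ and top $1$. Since $[E]_{\vee}$ is a subalgebra containing $E$, minimality of $[E]$ yields $[E]\subseteq[E]_{\vee}$; conversely $[E]$ is closed under joins and contains $E$, so minimality of $[E]_{\vee}$ yields $[E]_{\vee}\subseteq[E]$. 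This proves part~1, that $[E]_{\vee}=[E]$.

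For part~2 I would exhibit complements explicitly: for $S\subseteq E$ set $(\bigvee S)'=\bigvee(E\setminus S)$. Using $\bigvee E=1$ one gets $\bigvee S\vee\bigvee(E\setminus S)=1$, and the meet formula above gives $\bigvee S\wedge\bigvee(E\setminus S)=\bigvee(S\cap(E\setminus S))=\bigvee\emptyset=0$. Hence every element of $[E]_{\vee}$ is complemented; as $[E]_{\vee}$ is a bounded distributive lattice (being a bounded sublattice of the distributive lattice $\mathcal{A}$), it is a Boolean algebra, the complement being unique by distributivity.

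I do not expect a genuine obstacle here: the only points requiring care are the passage to finite distributivity (legitimate since $\mathcal{A}$ is finite, so all joins in sight are finite) and the bookkeeping that distinct partition elements meet to $0$. If a crisper formulation is wanted, one can note in passing that $S\mapsto\bigvee S$ is in fact a bijection $\mathcal{P}(E)\to[E]_{\vee}$ — injectivity following because, for $s\in E$, $s\le\bigvee T$ forces $s\in T$ — so that $[E]_{\vee}$ is isomorphic to the power-set algebra $\mathcal{P}(E)$, from which closure under complement and the Boolean structure are immediate.
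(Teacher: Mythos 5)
Your proposal is correct and takes essentially the same route as the paper: both describe $[E]_{\vee}$ as $\{\bigvee S:S\subseteq E\}$, prove meet-closure via distributivity together with the partition identity $(\bigvee S)\wedge(\bigvee T)=\bigvee(S\cap T)$, obtain part 1 from minimality in both directions, and prove part 2 by exhibiting $\bigvee(E\setminus S)$ as the complement of $\bigvee S$. Your concluding remark that $S\mapsto\bigvee S$ is a bijection from $\mathcal{P}(E)$ onto $[E]_{\vee}$ is also sound and corresponds to the paper's separate theorem on unique generation of elements of $[E]$ by joins from $E$.
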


\begin{proof}
1. We show that $[E]_{\vee }$ is the least subalgebra of $\mathcal{A}$
including $E$. Firstly we show that $[E]_{\vee }$ is a subalgebra of $%
\mathcal{A}$. We observe that $1\in \lbrack E]_{\vee }$ because $\bigvee E=$ 
$1$ and $0\in \lbrack E]_{\vee }$ because $\bigvee \emptyset =0$. Obviously, 
$[E]_{\vee }$ is closed with respect to $\vee $ by definition. We prove that
if $a$, $b\in $ $[E]_{\vee }$ then $a\wedge b\in \lbrack E]$. If $a=0$ or $%
b=0$ then $a\wedge b\in \lbrack E]$. Then we suppose $a\neq $, $0$ and $%
b\neq 0$. By hypothesis, there are some non empty subsets $X$, $Y\subseteq E$
such that $a=\bigvee X$ and $b=\bigvee Y$. So 
\begin{equation*}
a\wedge b=(\bigvee X)\wedge (\bigvee Y)=\bigvee \{x\wedge y:x\in X\text{, }%
y\in Y\}.
\end{equation*}
As $E$ is a partition, $x\wedge y=0$ when $x\neq y$ and $x\wedge y=x$ when $%
x=y$, so 
\begin{equation*}
a\wedge b=\left\{ 
\begin{array}{ccc}
0 & \mathrm{if} & X\cap Y=\emptyset , \\ 
\bigvee (X\cap Y) & \mathrm{if} & X\cap Y\neq \emptyset .
\end{array}
\right.
\end{equation*}
In both cases, $a\wedge b\in \lbrack E]_{\vee }$, as $X\cap Y\subseteq E$.
Trivially $E\subseteq \lbrack E]_{\vee }$. Minimality follows because, for
all $\mathcal{B}\subseteq \mathcal{A}$ such that $E\subseteq B$, $[E]_{\vee
}\subseteq \mathcal{B}$.

2. We prove that for all $a\in \lbrack E]_{\vee }$ there is an element $%
\lnot a$ that is the complement of $a$ in $[E]_{\vee }$. We know that, by
hypothesis, $a=\bigvee X$ for some $X\subseteq E$, so we set $\lnot
a=\bigvee (E-X)$. Then we have 
\begin{equation*}
a\vee \lnot a=\bigvee X\vee \bigvee (E-X)=\bigvee E=1
\end{equation*}
and 
\begin{eqnarray*}
a\wedge \lnot a &=&\bigvee X\wedge \bigvee (E-X) \\
&=&\bigvee \{x\wedge y:x\in X\text{, }y\in E-X\} \\
&=&0,
\end{eqnarray*}
as $X\cap (E-X)=\emptyset $ and $x\wedge y=0$ when $x\neq y$.
\end{proof}

\begin{theorem}
\label{teo3}If $E$, $D\in \Pi (\mathcal{A})$ then $E\leq D$ iff $%
[D]\subseteq \lbrack E]$.
\end{theorem}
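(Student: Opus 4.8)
The plan is to lean entirely on the explicit description of generated subalgebras furnished by Theorem \ref{teo2}: for any partition $X$ we have $[X]=[X]_{\vee}$, the set of all joins $\bigvee Y$ with $Y\subseteq X$ (including the empty join $0$), and this set is already a Boolean subalgebra. Both inclusions in the equivalence then become statements about such joins, and the two directions can be handled separately.

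For the forward direction I would assume $E\leq D$ and show $D\subseteq[E]$, then invoke minimality of $[D]$. Indeed, by Theorem \ref{teo1} every $d\in D$ satisfies $d=\bigvee E_{d}$ with $E_{d}=\{x\in E:x\leq d\}\subseteq E$, so $d$ is a join of a subset of $E$ and hence lies in $[E]_{\vee}=[E]$. Since $[E]$ is a subalgebra containing $D$, and $[D]$ is by definition the least subalgebra containing $D$, we get $[D]\subseteq[E]$.

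For the converse I would assume $[D]\subseteq[E]$ and recover the refinement $E\leq D$ pointwise. Each $d\in D\subseteq[E]=[E]_{\vee}$ can be written as $d=\bigvee X_{d}$ for some $X_{d}\subseteq E$. Fix $e\in E$; since $\bigvee D=1$ we have $e=e\wedge\bigvee D=\bigvee\{e\wedge d:d\in D\}$, and as $e\neq 0$ there must be some $d\in D$ with $e\wedge d\neq 0$. Expanding $e\wedge d=\bigvee\{e\wedge x:x\in X_{d}\}$ and using that $E$ is a partition (so $e\wedge x=0$ for $x\neq e$ while $e\wedge e=e$), the condition $e\wedge d\neq 0$ forces $e\in X_{d}$, whence $e\leq\bigvee X_{d}=d$. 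This yields, for every $e\in E$, an element of $D$ above it, i.e. $E\leq D$.

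The main obstacle is the converse, and within it the single step that converts the purely algebraic containment $d\in[E]$ into the order relation $e\leq d$. The link is precisely that $e$ occurs among the join-generators $X_{d}$ of $d$ exactly when $e\wedge d\neq 0$, and it is here that the defining property of a partition of $E$ (pairwise meets equal to $0$) does the real work. Everything else is routine bookkeeping with Theorems \ref{teo1} and \ref{teo2}.
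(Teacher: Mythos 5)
Your proof is correct, and while the forward direction matches the paper's (both rest on Theorem \ref{teo1}, $d=\bigvee E_{d}$; you merely streamline it by proving $D\subseteq[E]$ and invoking minimality of $[D]$, where the paper expands an arbitrary $x=\bigvee D^{\prime}\in[D]$ as a join of joins), your converse takes a genuinely different route. The paper fixes $e\in E$, notes $e\leq 1=\bigvee D$, asserts that every $e\in E$ is $\vee$-irreducible in $[E]$, and then cites lemma 5.11 of \cite{davey2002} to extract a $d\in D$ with $e\leq d$. You avoid both the irreducibility claim and the external citation: writing each $d\in D$ as $\bigvee X_{d}$ with $X_{d}\subseteq E$ (Theorem \ref{teo2}), you observe by distributivity that $e\wedge d=\bigvee\{e\wedge x:x\in X_{d}\}$, which the partition axioms force to be $e$ if $e\in X_{d}$ and $0$ otherwise; choosing $d$ with $e\wedge d\neq 0$ (which exists since $e=e\wedge\bigvee D\neq 0$, exactly the paper's lemma \ref{lemma6}) then gives $e\in X_{d}$ and hence $e\leq d$. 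What your version buys is self-containedness: it uses only the partition axioms, distributivity, and Theorems \ref{teo1} and \ref{teo2}, whereas the paper's join-irreducibility assertion is left unproved at that point and really needs either your meet computation or something like Theorem \ref{teoungen}, which only appears later. What the paper's version buys is brevity and a structural insight — partition elements are precisely join-irreducibles of the Boolean subalgebra they generate — at the cost of an appeal to outside machinery.
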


\begin{proof}
We assume $E\leq D$. If $x\in \lbrack D]$ then $x=\bigvee D^{\prime }$ for
some $D^{\prime }\subseteq D$. For all $d^{\prime }\in D^{\prime }$, we have 
$d^{\prime }=\bigvee E_{d^{\prime }}$ by \ref{teo1}, so we have $x=\bigvee
\{\bigvee E_{d^{\prime }}:d^{\prime }\in D^{\prime }\}$ and $x\in \lbrack E]$%
. We assume $[D]\subseteq \lbrack E]$. Then $D\subseteq \lbrack E]$ and $%
\bigvee D\in \lbrack E]$, so for all $e\in E$ we have $e\leq 1=\bigvee D$.
As every $e\in E$ is $\vee $-irreducible in $[E]$, there is a $d\in D$ such
that $e\leq d$, by lemma 5.11 of \cite{davey2002}, so $E\leq D$.
\end{proof}

Now we can give a more constructive description of $E\vee D$.

\begin{theorem}
\label{teopartsup}If $\mathcal{A}$ is a finite distributive lattice and $E$, 
$D$ are partitions of $\mathcal{A}$, then $E\vee D=At([E]\cap \lbrack D])$.
\end{theorem}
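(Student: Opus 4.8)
The plan is to exploit the order-reversing correspondence between partitions of $\mathcal{A}$ and Boolean subalgebras of $\mathcal{A}$ furnished by Theorems \ref{teo2} and \ref{teo3}, and to show that $At([E]\cap [D])$ is exactly the least common upper bound of $E$ and $D$ in $\Pi(\mathcal{A})$.

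First I would check that $B=[E]\cap [D]$ is a finite Boolean subalgebra of $\mathcal{A}$. The intersection of two subalgebras is a subalgebra, so $B$ is a bounded sublattice containing $0$ and $1$. For complementation, if $a\in B$ then $a$ has a complement in $[E]$ and a complement in $[D]$; since $\mathcal{A}$ is distributive, complements are unique, so these two complements coincide and the common value $\lnot a$ lies in $B$. Hence $B$ is Boolean, and $At(B)$ is defined and forms a partition of $\mathcal{A}$ (its atoms are nonzero, pairwise meet to $0$, and join to $1$, exactly as in the proof of Theorem \ref{teopartlatt}). Moreover, since $B$ is a finite Boolean algebra, every element of $B$ is a join of atoms of $B$ (lemma 5.4 of \cite{davey2002}), so $B=[At(B)]_{\vee }=[At(B)]$ by Theorem \ref{teo2}.

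Next I would verify that $At(B)$ is a common upper bound of $E$ and $D$. By Theorem \ref{teo3}, $E\leq At(B)$ holds iff $[At(B)]\subseteq [E]$; but $[At(B)]=B=[E]\cap [D]\subseteq [E]$, so $E\leq At(B)$, and symmetrically $D\leq At(B)$. Then I would establish minimality: suppose $Z\in \Pi(\mathcal{A})$ satisfies $E\leq Z$ and $D\leq Z$. By Theorem \ref{teo3} this gives $[Z]\subseteq [E]$ and $[Z]\subseteq [D]$, hence $[Z]\subseteq [E]\cap [D]=B=[At(B)]$; applying Theorem \ref{teo3} once more yields $At(B)\leq Z$. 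Thus $At(B)$ lies below every common upper bound of $E$ and $D$, so $At(B)=E\vee D$ by the characterization of the join in $\Pi(\mathcal{A})$ recorded after Theorem \ref{teopartlatt}.

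The only delicate point is the first step: one must confirm that $[E]\cap [D]$ is genuinely closed under complement rather than merely a sublattice, which is where the distributivity of $\mathcal{A}$ (forcing uniqueness of complements) does the real work. Once $B$ is known to be a Boolean subalgebra satisfying $B=[At(B)]$, everything else is a mechanical application of the contravariant equivalence of Theorem \ref{teo3}, under which the join of partitions corresponds to the intersection of the associated subalgebras.
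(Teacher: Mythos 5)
Your proof is correct and follows essentially the same route as the paper's: establish that $[E]\cap[D]$ is a Boolean subalgebra whose atoms form a partition of $\mathcal{A}$, then use the contravariant correspondence of Theorem \ref{teo3} to show $At([E]\cap[D])$ is a common upper bound of $E$ and $D$ and lies below every other common upper bound. In fact you make explicit two points the paper glosses over --- that closure under complement follows from uniqueness of complements in a distributive lattice, and that $[At([E]\cap[D])]=[E]\cap[D]$ because every element of a finite Boolean algebra is a join of atoms --- so your write-up is, if anything, slightly more complete.
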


\begin{proof}
Firstly we observe that $[E]$, $[D]$ are subalgebras of $\mathcal{A}$, so $%
[E]\cap \lbrack D]$ is also a subalgebra of $\mathcal{A}$. As $[E]$, $[D]$
are Boolean algebras, by \ref{teo2}, so is $[E]\cap \lbrack D]$ and we can
speak of $At([E]\cap \lbrack D])$. As $At([E]\cap \lbrack D])$ is a
partition of $[E]\cap \lbrack D]$ and $[E]\cap \lbrack D]$ is a subalgebra
of $\mathcal{A}$, $At([E]\cap \lbrack D])$ is a partition of $\mathcal{A}$.
Firstly, we show that $E\leq At([E]\cap \lbrack D])$. In fact, $[At([E]\cap
\lbrack D])]=[E]\cap \lbrack D]\subseteq \lbrack E]$ so, by theorem \ref
{teo3}, we can conclude that $E\leq At([E]\cap \lbrack D])$. In the same way
we prove that $D\leq At([E]\cap \lbrack D])$. Finally, we prove that, for
all $G$ such that $E\leq G$ and $D\leq G$, we have $At([E]\cap \lbrack
D])\leq G$. From our hypothesis, by theorem \ref{teo3}, $\ [G]\subseteq
\lbrack E]$ and [$G]\subseteq \lbrack D]$, so $[G]\subseteq \lbrack E]\cap
\lbrack D]=[At([E]\cap \lbrack D])]$ and by theorem \ref{teo3} we have $%
At([E]\cap \lbrack D])\leq $ $G$.
\end{proof}

The following theorem shows that every element of $[E]$, different from $0$,
is uniquely generated by $\bigvee $ from $E$.

\begin{theorem}
\label{teoungen}For all $E\in \Pi (A)$, if $X$, $Y\subseteq E$ and $\bigvee
X=\bigvee Y$, then $X=Y$.
\end{theorem}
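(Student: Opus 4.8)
The plan is to establish the two inclusions $X\subseteq Y$ and $Y\subseteq X$ separately; by the evident symmetry of the hypothesis it suffices to carry out one of them in detail. The essential tools are the finite distributive law in $\mathcal{A}$ together with the pairwise disjointness (property 2) and non-triviality (property 3) of the blocks of a partition.

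First I would fix an arbitrary $e\in X$ and note that $e\leq \bigvee X=\bigvee Y$, so that
\begin{equation*}
e=e\wedge \bigvee Y=\bigvee \{e\wedge y:y\in Y\},
\end{equation*}
where the second equality is the distributivity of $\wedge$ over a finite $\bigvee$, valid because $\mathcal{A}$ is a finite distributive lattice and $Y$ is finite. Next I would exploit that both $e$ and every $y\in Y$ lie in the partition $E$: by property 2 we have $e\wedge y=0$ whenever $y\neq e$, while $e\wedge y=e$ when $y=e$. Hence the join on the right collapses to $e$ in case $e\in Y$ and to $0$ in case $e\notin Y$. Since we have just computed that this join equals $e$, and since $e\neq 0$ by property 3, the possibility $e\notin Y$ is excluded, forcing $e\in Y$. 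This yields $X\subseteq Y$, and the identical argument with the roles of $X$ and $Y$ interchanged gives $Y\subseteq X$, so $X=Y$.

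I expect no genuine obstacle: the entire content is the interplay of distributivity, disjointness of blocks, and the fact that no block is $0$. The only point deserving a word of care is the degenerate case $X=\emptyset$ (and symmetrically $Y=\emptyset$), in which $\bigvee X=0$; but this is already subsumed in the above reasoning, since then $\bigvee Y=0$ forces $y\leq 0$, i.e. $y=0$, for every $y\in Y$, contradicting property 3 unless $Y=\emptyset$ as well. Thus the uniqueness of the generating subset follows directly.
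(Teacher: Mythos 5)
Your proof is correct and takes essentially the same route as the paper's: fix $e\in X$, expand $e=e\wedge \bigvee Y=\bigvee \{e\wedge y:y\in Y\}$ by distributivity, and use pairwise disjointness together with $e\neq 0$ to force $e\in Y$, then conclude by symmetry. The only cosmetic difference is that the paper phrases the last step as a contradiction (assuming $e\wedge y=0$ for all $y\in Y$) where you do a direct case split on $e\in Y$ versus $e\notin Y$, and your explicit remark on the degenerate case $X=\emptyset$ is a harmless addition.
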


\begin{proof}
If $x\in X$ then $x\leq \bigvee X=\bigvee Y$. If for all $y\in Y$ we have $%
x\wedge y=0$, then $x=x\wedge \bigvee Y=\bigvee \{x\wedge y:y\in Y\}=0$, but
this is absurd because $x\in E$ and $E$ is a partition, so there is $y\in Y$
such that $x\wedge y\neq 0$. As $x$, $y\in E$, this implies $x=y$, so $x\in
Y $ and $X\subseteq Y$. In the same way we prove that $Y\subseteq X$.
\end{proof}

\section{Allais Paradox and intrinsic expected value\label{Appendix B}}

A decision problem leading to a somewhat paradoxical conclusion has been
presented by Maurice Allais in 1953 (see \cite{allais1953}). The acts
involved are $\alpha $, $\alpha ^{\prime }$, $\beta $ and $\beta ^{\prime }$%
, all having a common domain $E=\{e_{1},e_{2},e_{3}\}$. The probabilities of
the three conditions are $p(e_{1})=0.01$, $p(e_{2})=0,1$ and $p(e_{3})=0.89$%
. The rewards, in dollars, are:

\begin{center}
\begin{tabular}{l|l|l|l}
& $e_{1}$ & $e_{2}$ & $e_{3}$ \\ \cline{1-4}
$\alpha $ & $500000$ & $500000$ & $500000$ \\ \cline{1-4}
$\alpha ^{\prime }$ & $0$ & $2500000$ & $500000$%
\end{tabular}
\hspace{10 mm} 
\begin{tabular}{l|l|l|l}
& $e_{1}$ & $e_{2}$ & $e_{3}$ \\ \cline{1-4}
$\beta $ & $500000$ & $500000$ & $0$ \\ \cline{1-4}
$\beta ^{\prime }$ & $0$ & $2500000$ & $0$%
\end{tabular}
\end{center}

\noindent The decision maker must choose between $\alpha $ and $\alpha
^{\prime }$ and between $\beta $ and $\beta ^{\prime }$: if he maximizes
expected utility, then $\alpha ^{\prime }$ is better than $\alpha $ and $%
\beta ^{\prime }$ is better then $\beta $, but empirical evidence shows that
a great many people prefer $\alpha $ to $\alpha ^{\prime }$ and $\beta
^{\prime }$ to $\beta $. So maximizing expected value cannot be considered
as an universal rule of choice between acts. This situation is generally
explained by observing that the choice between $\alpha $ and $\alpha
^{\prime }$ is a decision problem qualitatively different from the choice
between $\beta $ and $\beta ^{\prime }$ The choice of $\alpha $ stems from
risk aversion, because $\alpha $ is a constant function that banishes every
aleatory aspect, so the decision maker leaves out any question about
probability and expected value. On the other side, $\beta $ and $\beta
^{\prime }$ are both risky acts and consideration of expected value is
appropriate. Before going farther in the analysis of Allais Paradox, we
introduce some concepts of general character.

When $X$ and $Y$ are partially ordered sets, we say that $f:X\rightarrow Y$
is an \textit{order embedding} when $x\leq x^{\prime }$ in $X$ iff $f(x)\leq
f(x^{\prime })$ in $Y$. (It can be easily seen that every order embedding is
injective.) If $f:A(E)\rightarrow A(E)$ is an order embedding, then any
decision problem about acts in $A(E)$ can be reduced to a decision problem
about acts in $f[A(E)]$, in the following sense. If $\leq $ represents
desirability of acts and we are asked if $\alpha \leq \beta $, then we can
shift the problem to desirability of $f(\alpha )$ and $f(\beta )$: if we
find that $f(\alpha )\leq f(\beta )$, then also $\alpha \leq \beta $. We
focus on the family of order embeddings associated to positive affine
transformations of $R$. For all pair of real numbers $h>0$ and $k$, $\tau
(x)=hx+k$ is the positive affine transformation associated to $(h,k)$. Now
we consider the function $f:A(E)\rightarrow A(E)$ such that $f(\alpha )=\tau
\circ \alpha $: we have, for all $e\in E$, $f(\alpha )(e)=\tau (\alpha
(e))=h\alpha (e)+k$.

\begin{theorem}
If $f:A(E)\rightarrow A(E)$, where $f(\alpha )=\tau \circ \alpha $ and $\tau 
$ is the positive affine transformations $\tau (x)=hx+k$, then

\begin{enumerate}
\item  $f$ is an order embedding of $A(E)$ in itself,

\item  for all $\alpha \in A(E)$ and all valuation $v:\mathcal{A}\rightarrow
\lbrack 0,1]$, $\exp (\tau \circ \alpha ,v)=\tau (\exp (\alpha ,v))$,

\item  $\exp (\alpha ,v)\leq \exp (\alpha ^{\prime },v)$ iff $\exp (f(\alpha
),v)\leq \exp (f(\alpha ^{\prime }))$.
\end{enumerate}
\end{theorem}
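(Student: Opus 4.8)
The plan is to verify the three assertions in turn, each reducing to a direct computation with the positive affine transformation $\tau(x)=hx+k$ where $h>0$. The crucial feature throughout is that $h>0$ makes $\tau$ strictly increasing, so it preserves the order of real numbers, and this is exactly what every part needs.

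For part 1, I would show $f$ is an order embedding by unwinding the definition of $\preceq_E$. Since $\alpha,\beta\in A(E)$ share the common domain $E$, we have $\alpha\preceq_E\beta$ iff $\alpha(e)\leq\beta(e)$ for all $e\in E$. I would argue that $\alpha(e)\leq\beta(e)$ iff $h\alpha(e)+k\leq h\beta(e)+k$, the forward direction because $h>0$ and the reverse because we may subtract $k$ and divide by $h>0$. Since $f(\alpha)(e)=\tau(\alpha(e))=h\alpha(e)+k$, this gives $\alpha\preceq_E\beta$ iff $f(\alpha)\preceq_E f(\beta)$, which is the definition of an order embedding.

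For part 2, I would compute $\exp(\tau\circ\alpha,v)$ directly from the definition $\exp(\gamma,v)=\sum\{\gamma(e)v(e):e\in E\}$. Substituting $(\tau\circ\alpha)(e)=h\alpha(e)+k$ and using linearity of the sum, together with the fact that $\sum\{v(e):e\in E\}=v(\bigvee E)=v(1)=1$ for a partition under a bounded isotone valuation, I would obtain
\begin{equation*}
\exp(\tau\circ\alpha,v)=h\sum\{\alpha(e)v(e):e\in E\}+k\sum\{v(e):e\in E\}=h\exp(\alpha,v)+k=\tau(\exp(\alpha,v)).
\end{equation*}
The one subtle point worth flagging is the normalization $\sum\{v(e):e\in E\}=1$, which is why the statement restricts to valuations into $[0,1]$; this is precisely the computation carried out earlier in the paper when showing $l_{\alpha,p}$ is a distribution.

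For part 3, I would combine part 2 with the strict monotonicity of $\tau$. By part 2, $\exp(f(\alpha),v)=\tau(\exp(\alpha,v))$ and $\exp(f(\alpha'),v)=\tau(\exp(\alpha',v))$, so the claim $\exp(\alpha,v)\leq\exp(\alpha',v)$ iff $\exp(f(\alpha),v)\leq\exp(f(\alpha'),v)$ reduces to the assertion that for reals $s,t$ one has $s\leq t$ iff $\tau(s)\leq\tau(t)$, which again holds because $h>0$. I do not anticipate a genuine obstacle here; the only care needed is to keep the roles of $h$ and $k$ straight and to invoke $h>0$ (never merely $h\neq 0$) wherever monotonicity is used, since a negative $h$ would reverse the inequalities and destroy the embedding property.
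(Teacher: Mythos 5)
Your proposal is correct and follows essentially the same route as the paper: part 1 by unwinding the pointwise order and cancelling the affine map using $h>0$, part 2 by the same linearity computation, and part 3 (which the paper simply calls trivial) by combining part 2 with the monotonicity of $\tau$. The only difference is cosmetic: you make explicit the normalization $\sum \{v(e):e\in E\}=v(\bigvee E)=v(1)=1$, which the paper uses silently in the last line of its computation for part 2.
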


\begin{proof}
1. We have $\alpha \leq \beta $ iff, for all $e\in E$, $\alpha (e)\leq \beta
(e)$ iff $h\alpha (e)+k\leq h\beta (e)+k$ iff $f(\alpha )\leq f(\beta )$.

2. We have 
\begin{eqnarray*}
\exp (\tau \circ \alpha ,v) &=&\sum \{(h\alpha (e)+k)v(e):e\in E\} \\
&=&\sum \{h\alpha (e)v(e)+kv(e):e\in E\} \\
&=&h\sum \{\alpha (e)v(e):e\in E\}+k\sum \{v(e):e\in E\} \\
&=&h\exp (\alpha ,v)+k.
\end{eqnarray*}

3.Trivial.
\end{proof}

The same kind of order embedding can be defined from $A(\mathcal{A})$ to $A(%
\mathcal{A})$ and a similar theorem can be proved.

Now we can give an equivalent formulation of Allais Paradox by defining four
acts as follows:

\begin{center}
\begin{tabular}{l|l|l|l}
& $e_{1}$ & $e_{2}$ & $e_{3}$ \\ \cline{1-4}
$f(\alpha )$ & $1$ & $1$ & $1$ \\ \cline{1-4}
$f(\alpha ^{\prime })$ & $0$ & $5$ & $1$%
\end{tabular}
\hspace{10 mm} 
\begin{tabular}{l|l|l|l}
& $e_{1}$ & $e_{2}$ & $e_{3}$ \\ \cline{1-4}
$f(\beta )$ & $1$ & $1$ & $0$ \\ \cline{1-4}
$f(\beta ^{\prime })$ & $0$ & $5$ & $0$%
\end{tabular}
\end{center}

The transformation involved is $\tau (x)=1/500000x$. As a consequence of
point 1) in the above theorem, we have $\exp (f(\alpha ),v)\leq $ $\exp
(f(\alpha ^{\prime }),v)$ and $\exp (f(\beta ),v)\leq $ $\exp (f(\beta
^{\prime }),v)$, but we may still prefer $f(\alpha )$ to $f(\alpha ^{\prime
})$ by risk aversion, as in the original formulation of Allais Paradox. We
underline that only an equivalence of mathematical character is discussed
here: $f$ preserves $\leq $, but may not preserve the psychological impact
of acts.

We can give a more abstract formulation of Allais Paradox with the following
acts:

\begin{center}
\begin{tabular}{l|l|l|l}
& $e_{1}$ & $e_{2}$ & $e_{3}$ \\ \hline
$\alpha $ & $x$ & $x$ & $x$ \\ \hline
$\alpha ^{\prime }$ & $0$ & $y$ & $x$%
\end{tabular}
\hspace{10mm} 
\begin{tabular}{l|l|l|l}
& $e_{1}$ & $e_{2}$ & $e_{3}$ \\ \hline
$\beta $ & $x$ & $x$ & $0$ \\ \hline
$\beta ^{\prime }$ & $0$ & $y$ & $0$%
\end{tabular}
\end{center}

\noindent We assume $x$, $y>0$. The particular case above is obtained
setting $x=1$ and $y=5$, but not every choice of $x$ and $y$ gives place to
an instance of Allais Paradox. By definition of the four acts, we have $\exp
(\alpha ,v)\leq \exp (\alpha ^{\prime },v)$ iff $\exp (\beta ,v)\leq \exp
(\beta ^{\prime },v)$, but in particular we should also have $\exp (\alpha
,v)<\exp (\alpha ^{\prime },v)$. We observe that 
\begin{eqnarray*}
\exp (\alpha ,v)<\exp (\alpha ^{\prime },v) &&\text{ iff }x\cdot 0.01+x\cdot
0.1+x\cdot 0.89<y\cdot 0.1+x\cdot 0.89 \\
&&\text{ }\text{ iff }x\cdot 1.1<y\text{,}
\end{eqnarray*}
so we can conclude that $\alpha ^{\prime }$ is better then $\alpha $ as far
as $y>x+\frac{1}{10}x$. Thus Allais Paradox can be sharpened by choosing a
value of $y$ just a little bigger than $x+\frac{1}{10}x$: if risk aversion
works when $y$ is five times $x$, it should also be at work with a lesser
value of $y$. As for $\beta $ and $\beta ^{\prime }$, there is no risk
aversion because both are risky acts. But there is still another aspect of
acts that can be considered in the analysis of Allais Paradox.

The consequences of acts are real numbers and we can rather naively say that
big numbers correspond to big rewards, but the meaning of `big' is dependent
from the context. We assume that the context of an act is the total sum of
rewards, so a reward is a really big one if it is a relevant part of this
total. For all act $\alpha :E\rightarrow R$, we define $T(\alpha )=\sum
\{\alpha (e):e\in E\}$ and we call $T(\alpha )$ the \textit{total} of $%
\alpha $. When $T(\alpha )>0$, we define an act $\bar{\alpha}:E\rightarrow R$
setting $\bar{\alpha}(e)=\frac{1}{T(\alpha )}\alpha (e)$: we call $\bar{%
\alpha}$ the \textit{standardization} of $\alpha $. Clearly, $T(\bar{\alpha}%
)=1$, because $\frac{1}{T(\alpha )}\sum \{\bar{\alpha}(e):e\in E\}=1$. If $%
\alpha (e)\geq 0$, for all $e\in E$, then $\bar{\alpha}[E]\subseteq \lbrack
0,1]$. Finally, we define 
\begin{equation*}
\overline{\exp }(\alpha ,v)=\exp (\bar{\alpha},v)
\end{equation*}
and call $\overline{\exp }(\alpha ,v)$ the \textit{intrinsic expected value}
of $\alpha $. In the original form of Allais Paradox, we have $\overline{%
\exp }(\alpha ,v)>\overline{\exp }(\alpha ^{\prime },v)$ and $\overline{\exp 
}(\beta ,v)<\overline{\exp }(\beta ^{\prime },v)$: if the decision maker
maximizes the intrinsic expected value, then $\alpha $ is preferred to $%
\alpha ^{\prime }$ and $\beta ^{\prime }$ to $\beta $. So the paradox can be
explained non only by risk aversion, but also by a different valuation of
the performance of an act, based on the ratio between the expected value and
the total amount of possible rewards. As a consequence of point 3) of the
following theorem, this kind of solution works for all the instances of the
original form of the Allais Paradox obtained by a similarity, i.e. a
positive affine transformation $\tau (x)=hx+k$ with $k=0$.

\begin{theorem}
If $f:A(E)\rightarrow A(E)$, where $f(\alpha )=\tau \circ \alpha $ and $\tau
(x)=hx$, with $k>0$, then:

\begin{enumerate}
\item  $\exp (\bar{\alpha},v)=\frac{1}{T(\alpha )}\exp (\alpha ,v)$,

\item  $\overline{\exp }(\tau \circ \alpha ,v)=\overline{\exp }(\alpha ,v)$,

\item  $\overline{\exp }(\alpha ,v)\leq \overline{\exp }(\alpha ^{\prime },v)
$ iff $\overline{\exp }(f(\alpha ),v)\leq \overline{\exp }(f(\alpha ^{\prime
}))$.
\end{enumerate}
\end{theorem}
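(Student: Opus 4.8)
The plan is to establish the three claims in the order given, since each rests on the one before. Claim 1 is a direct consequence of the linearity of $\exp$ in its first argument together with the definition of standardization; Claim 2 is the heart of the theorem and asserts that the standardization operation is invariant under the similarity $\tau$; Claim 3 then follows at once from Claim 2. The hypothesis $h>0$ (the ``$k>0$'' in the statement is a slip for $h>0$) is needed only to guarantee that the relevant totals stay positive, so that the standardizations are defined.

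For Claim 1 I would simply unfold the definitions. Since $\bar{\alpha}(e)=\frac{1}{T(\alpha)}\alpha(e)$ for every $e\in E$, I would pull the constant factor $\frac{1}{T(\alpha)}$ out of the sum defining $\exp(\bar{\alpha},v)=\sum\{\bar{\alpha}(e)v(e):e\in E\}$, obtaining $\frac{1}{T(\alpha)}\sum\{\alpha(e)v(e):e\in E\}=\frac{1}{T(\alpha)}\exp(\alpha,v)$, which is exactly the assertion.

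For Claim 2 the key observation is that the total $T$ and the act itself both scale by the same factor $h$ under $\tau$, so the normalization cancels this factor. Concretely, I would first compute $T(\tau\circ\alpha)=\sum\{h\alpha(e):e\in E\}=hT(\alpha)$. Then, for each $e\in E$, the standardization of $\tau\circ\alpha$ is $\overline{\tau\circ\alpha}(e)=\frac{1}{hT(\alpha)}\,h\alpha(e)=\frac{1}{T(\alpha)}\alpha(e)=\bar{\alpha}(e)$, so that $\overline{\tau\circ\alpha}=\bar{\alpha}$ as acts. Applying $\exp(\cdot,v)$ gives $\overline{\exp}(\tau\circ\alpha,v)=\exp(\bar{\alpha},v)=\overline{\exp}(\alpha,v)$. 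Alternatively, one can combine Claim 1 with part 2 of the preceding theorem, which in the case $k=0$ reads $\exp(\tau\circ\alpha,v)=h\exp(\alpha,v)$, and again the factor $h$ cancels against $T(\tau\circ\alpha)=hT(\alpha)$ to yield the same conclusion.

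Claim 3 is then immediate: since $f(\alpha)=\tau\circ\alpha$, Claim 2 yields $\overline{\exp}(f(\alpha),v)=\overline{\exp}(\alpha,v)$ and likewise $\overline{\exp}(f(\alpha'),v)=\overline{\exp}(\alpha',v)$, so the two inequalities $\overline{\exp}(\alpha,v)\leq\overline{\exp}(\alpha',v)$ and $\overline{\exp}(f(\alpha),v)\leq\overline{\exp}(f(\alpha'),v)$ are literally the same statement. I do not expect a genuine obstacle in any of these steps; the only point that requires attention is the standing assumption that the totals are positive so that the standardizations and hence $\overline{\exp}$ are well defined, which is precisely why the theorem restricts to a similarity with $h>0$ (and to acts $\alpha$, $\alpha'$ with positive total).
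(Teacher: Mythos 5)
Your proposal is correct and follows essentially the same route as the paper: claim 1 by pulling the constant $\frac{1}{T(\alpha)}$ out of the defining sum, claim 2 via the computation $T(\tau\circ\alpha)=hT(\alpha)$ and cancellation of the factor $h$ (your ``alternative'' argument, combining claim 1 with $\exp(\tau\circ\alpha,v)=h\exp(\alpha,v)$ from the preceding theorem, is exactly the paper's proof), and claim 3 as an immediate consequence. Your primary argument for claim 2, establishing the pointwise identity $\overline{\tau\circ\alpha}=\bar{\alpha}$ of the standardized acts before applying $\exp(\cdot,v)$, is only a marginally more direct variant, and you correctly diagnose the hypothesis ``$k>0$'' as a slip for $h>0$.
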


\begin{proof}
1. We have
\begin{equation*}
\sum \{\bar{\alpha}(e)v(e):e\in E\}=\frac{1}{T(\alpha )}\sum \{\bar{\alpha}%
(e)v(e):e\in E\}=\frac{1}{T(\alpha )}\exp (\alpha ,v).
\end{equation*}

2. We have $T(\tau \circ \alpha )=\sum \{h\alpha (e):e\in E\}=h\sum \{\alpha
(e):e\in E\}=hT(\alpha )$. So 
\begin{eqnarray*}
\overline{\exp }(\tau \circ \alpha ,v) &=&\exp (\overline{\tau \circ \alpha }%
,v)=\frac{1}{T(\tau \circ \alpha )}\exp (\tau \circ \alpha ,v)=\frac{1}{%
hT(\alpha )}h\exp (\alpha ,v) \\
&=&\frac{1}{T(\alpha )}\exp (\alpha ,v)=\exp (\bar{\alpha},v).
\end{eqnarray*}

3. Trivial.
\end{proof}

If we take in account the abstract form of Allais Paradox, we have 
\begin{eqnarray*}
\overline{\exp }(\alpha ^{\prime },v)<\overline{\exp }(\alpha ^{\prime },v) &%
\text{ iff }&\frac{1}{T(\alpha ^{\prime })}\exp (\alpha ^{\prime },v)<\frac{1%
}{T(\alpha )}\exp (\alpha ,v) \\
&\text{ iff }&\frac{1}{x+y}(y\cdot 0.1+x\cdot 0.89)<\frac{1}{3} \\
&\text{ iff }&2.3857x<y\text{,}
\end{eqnarray*}
while on the other side we have always $\overline{\exp }(\beta ,v)<\overline{%
\exp }(\beta ^{\prime },v)$.


\begin{thebibliography}{9}
\bibitem{allais1953}  M. Allais, Le comportement de l'homme rationnel devant
lo risque, \textit{Econometrica}, 21, 503-546, 1953.

\bibitem{birkhoff1967}  G. Birkhoff, \textit{Lattice Theory}, A.M.S. v. 25,
1967.

\bibitem{ck1990}  C.C. Chang, H.J. Keisler, \textit{Model Theory},
North-Holland, 1990.

\bibitem{davey2002}  B. A. Davey, H. A. Priestley, \textit{Introduction to
Lattices and Order}, Cambridge U.P.,2002 (second edition).

\bibitem{ellsberg1961}  D. Ellsberg, Risk, ambiguity and the Savage axioms, 
\textit{Quarterly Journal of Economics}, 75, 643-669, 1961.

\bibitem{jeffrey1983}  R. C. Jeffrey, \textit{The logic of decision},
University of Chicago Press, 1983.

\bibitem{negri2013}  M. Negri, Partial probability and Kleene Logic, \textit{%
arXiv}:1310.6172, 2013.
\end{thebibliography}
\end{document}